\newtheorem{lem}{Lemma}
\newtheorem{cor}{Corollary}
\newtheorem{defn}{Definition}
\newtheorem{rem}{Remark}
\newcommand{\Tau}{\scalebox{1.44}{$\tau$}}
\newcommand{\R}{{\mathbb R}}
\newcommand{\N}{{\mathbb N}}
\renewcommand{\P}{{\mathbf P}}
\newcommand{\E}{{\mathbf E}}
\newcommand{\I}{{\mathbf I}}
\newcommand{\1}{{\mathbf 1}}
\newcommand{\0}{{\mathbf 0}}
\newcommand{\eps}{{\epsilon}}
\newcommand{\cA}{{\mathcal A}}
\newcommand{\cB}{{\mathcal B}}
\newcommand{\cC}{{\mathcal C}}
\newcommand{\hcB}{{\hat \cB}}
\newcommand{\tcB}{{\tilde \cB}}
\newcommand{\cH}{{\mathcal H}}
\newcommand{\cS}{{\mathcal S}}
\newcommand{\cG}{{\mathcal G}}
\newcommand{\cO}{{\mathcal O}}
\newcommand{\cD}{{\mathcal D}}
\newcommand{\cR}{{\mathcal R}}
\newcommand{\hw}{{\hat \w}}
\renewcommand{\a}{{\mathbf a}}
\renewcommand{\b}{{\mathbf b}}
\newcommand{\e}{{\mathbf e}}
\newcommand{\p}{{\mathbf p}}
\newcommand{\w}{{\mathbf w}}
\newcommand{\x}{{\mathbf x}}
\newcommand{\y}{{\mathbf y}}
\newcommand{\z}{{\mathbf z}}
\newcommand{\sm}{\setminus}
\newcommand{\supp}{\text{Supp}}
\newcommand{\bign}[1]{\big(#1\big)}
\newcommand{\biggn}[1]{\bigg(#1\bigg)}
\newcommand{\biggsn}[1]{\bigg[#1\bigg]}
\def \dpdb{{\it DP-DB}}
\def \papertitle{DP-Dueling: Learning from Preference Feedback without Compromising User Privacy} 
\newcommand{\tO}{{\tilde O}}
\newcommand{\algfin}{DP-EBS-Elimination}
\newcommand{\alggen}{GOptimal-DPDB}
\newcommand{\ceil}[1]{\Big \lceil{#1} \Big\rceil}
\newcommand{\BinTr}{\mathsf{BinTree}}
\newcommand{\ha}[1]{
		\textcolor{blue}{\textbf{HA:} {#1}}
}
\newcommand{\as}[1]{
		\textcolor{red}{\textbf{AS:} {#1}}
}
\newcommand{\ed}{\ensuremath{(\eps,\delta)}}
\title{\papertitle}
\author{}
\author{
Aadirupa Saha%
\thanks{Apple. {\tt aadirupa@ttic.edu}.}
\and 
Hilal Asi%
\thanks{Apple. {\tt hasi@apple.com}.} 
}
\date{}
\begin{document}

\maketitle

\begin{abstract}
    We consider the well-studied dueling bandit problem, where a learner aims to identify near-optimal actions using pairwise comparisons, under the constraint of differential privacy.
    We consider a general class of utility-based preference matrices for large (potentially unbounded) decision spaces and give the first differentially private dueling bandit algorithm for active learning with user preferences. Our proposed algorithms are computationally efficient with near-optimal performance, both in terms of the private and non-private regret bound. 
    More precisely, we show that when the decision space is of finite size $K$, our proposed algorithm yields order optimal $O\bign{\sum_{i = 2}^K\log\frac{KT}{\Delta_i} + \frac{K}{\epsilon}}$ regret bound for pure $\epsilon$-DP, where $\Delta_i$ denotes the suboptimality gap of the $i$-th arm. We also present a matching lower bound analysis which proves the optimality of our algorithms. 
    Finally, we extend our results to any general decision space in $d$-dimensions with potentially infinite arms and design an $\epsilon$-DP  algorithm with regret $\tilde O \left(   \frac{d^6}{\kappa \epsilon  } + \frac{ d\sqrt{T }}{\kappa} \right)$, providing privacy for free when $T \gg d$. 
    
\end{abstract}


\section{Introduction}
\label{sec:intro}

Research has indicated that it is often more convenient, faster, and cost-effective to gather feedback in a relative manner rather than using absolute ratings \citep{musallam2004cognitive,kahneman1982psychology}. To illustrate, when assessing an individual's preference between two items, such as A and B, it is often easier for respondents to answer preference-oriented queries like ``Which item do you prefer, A or B?'' instead of requesting to rate items A and B on a scale ranging from 0 to 10. 
From the perspective of a system designer, leveraging this user preference data can significantly enhance system performance, especially when this data can be collected in a relative and online fashion. This applies to various real-world scenarios, including recommendation systems, crowd-sourcing platforms, training bots, multiplayer games, search engine optimization, online retail, and more.

In many practical situations, particularly when human preferences are gathered online, such as designing surveys, expert reviews, product selection, search engine optimization, recommender systems, multiplayer game rankings, and even broader reinforcement learning problems with complex reward structures, it's often easier to elicit preference feedback instead of relying on absolute ratings or rewards. 

Because of its broad utility and the simplicity of gathering data using relative feedback, learning from preferences has become highly popular in the machine learning community. It has been extensively studied over the past decade under the name ``Dueling-Bandits" (DB) in the literature. This framework is an extension of the traditional multi-armed bandit (MAB) setting, as described in \cite{Auer+02}. In the DB framework, the goal is to identify a set of 'good' options from a fixed decision space, which consists of a set of items. This is accomplished by collecting preference feedback through actively chosen pairs of items. The problem has been addressed in various works such as \citep{Yue+12,Ailon+14,Zoghi+14RUCB,Zoghi+14RCS,Zoghi+15,VDB,SDB}, along with its subsetwise generalization e.g. \emph{Battling Bandits} \citep{SG18,SGwin18,SGrank18,SGinst20,Ren+18} or \emph{Multi-dueling Bandits} \citep{Brost+16,sui18survey,Sui+17,GS21,SGrum20,SG19}.  Over the years the preference-based learning setup is extended from finite to large or potentially infinite decision spaces \cite{Yue+09,saha21}, stochastic to adversarial preferences \cite{Adv_DB,ADB} and contextual scenarios
\cite{CDB,bengs2022stochastic}, item unavailability \cite{SGV20,gaillard2023one}, non-stationary preferences \cite{SahaNDB,KBS22,BengsNDB}. 
In fact, not only in the learning theory literature, but learning with preference feedback has been studied extensively in many interdisciplinary areas including 
Reinforcement Learning (RL) \citep{wirth16,sui19,xu20,april,PacchianoDuelingRL,busa14,shao2024eliciting},
Robotics \citep{wirth17,biyik2022aprel}, 
Multiobjective optimization \citep{Yue+09,kumagai2017regret,SKM21,SKM22,blum2024dueling,SKM23},
Game theory \citep{bai20,bai+20,liu21,CDB,RDB}, 
Econometrics \citep{assort-mallows,assort-mnl,SGa24}, 
Healthcare \citep{tucker2020preference,li2021roial}.


Amidst the growing surge of interest in leveraging preference feedback for machine learning, it is noteworthy that there exists a notable gap in the literature concerning the privacy of user preferences within the feedback data. This gap becomes particularly significant in real-world systems reliant on human preferences for training artificial intelligence models. For instance, recommender systems exemplified by industry giants like Netflix and YouTube, which derive insights from user preferences, such as click behavior or viewing history. The safeguarding of sensitive user information in these contexts is extremely important, necessitating the implementation of robust privacy-preserving mechanisms.

Likewise, in the domain of healthcare systems, e.g. personalized diet and activity recommendations, the preservation of an individual user's specific choices and preferences is not merely a preference but a fundamental requirement. Unfortunately, none of the existing dueling bandits methods have addressed this crucial aspect of preserving user privacy, highlighting a significant gap in the current landscape of preference-based machine learning with human feedback.

Towards that end, there has been several recent attempts to design online algorithms ensuring differentially privacy~\cite{SmithTh13,AgarwalSi17,AsiFeKoTa23}. However, most of these papers are limited to classical settings such as online convex optimization~\cite{AgarwalSi17}, prediciton from expert advice~\cite{AsiFeKoTa23}, or bandits~\cite{SmithTh13}. In this work, we initiate the study of privacy-preserving algorithms for duelling bandits, motivated by recent applications in the field of robotics \cite{tucker2020preference,dorsacar,dorsastall}, reinforcement learning with human feedback (RLHF) \cite{rlhf,rlhff,srlhf}, preference based RL \cite{PacchianoDuelingRL,wirth17} and large language models \cite{instructgpt,chatgpt,openai2022chatgpt}.

\paragraph{Informal Problem Setting.} In an active learning framework, we address the task of constructing differentially private prediction models from user preferences. We consider a decision space $\mathcal{D} \subset \mathbb{R}^d$, where each item $\mathbf{x} \in \mathcal{D}$ is associated with a reward parameter $r(\mathbf{x}) = \mathbf{w}^\top \mathbf{x}$ for an unknown weight vector $\mathbf{w} \in \mathbb{R}^d$. The learner's objective is to find the item with the highest reward $\mathbf{x}^* := \arg\max_{\mathbf{x} \in \mathcal{D}} r(\mathbf{x})$ with online access to user preferences.

At each round $t$, the learner (algorithm) selects a pair of items $(\mathbf{a}_t, \mathbf{b}_t)$ from the decision space $\mathcal{D}$ and receives binary preference feedback $o_t \in \{0,1\}$ indicating the action that is more favorable to the user between $a_t$ and $b_t$. The probability of $\mathbf{a}_t$ winning over $\mathbf{b}_t$ is modeled as Bernoulli random variable $o_t \sim \text{Ber}\big(\sigma(r(\mathbf{a}_t)-r(\mathbf{b}_t))\big)$, where $\sigma: \mathbb{R} \mapsto [0,1]$ is the logistic link function defined as $\sigma(x) = (1+e^{-x})^{-1}$. Here, $\P(o_t=1)$ represents the probability of $\mathbf{a}_t$ winning the duel against $\mathbf{b}_t$.

\paragraph{Objective. } Assuming $\x^*:= \arg\max_{\x \in \cD}r(\x)$ to be the highest rewarding arm, the goal of the learner is to minimize the $T$-round regret:   
\begin{equation*}
    \cR_T(\cA) := \sum_{t=1}^T \bigg(2r(\x^*) - r(\a_t) - r(\b_t) \bigg),
\end{equation*}
under the constraint of  $(\epsilon,\delta)$-Differential Privacy (more formal discussion is given in \cref{sec:obj}).


\subsection{Our Contributions}
\label{sec:contribution}
\begin{itemize}[nosep,leftmargin=0pt]
	\item We initiate the study of differentially private dueling bandits (\dpdb), aiming to understand the funndamental tradeoffs between learning with preferences and preserving the privacy of users.  We formally define the problem in~\Cref{sec:problem}.  
	
	\item We begin our contributions by studying the \dpdb~problem in the setting of finite decision space in~\Cref{sec:fin}. We develop an elimination-based algorithm that plays the arms in a round-robin fashion, and uses the binary tree mechanism to privately estimate the reward of each arm. We show that this algorithm achieves regret 
 $$\cR_T(\cA) = O\left( \sum_{i = 2}^K \frac{\log (KT)}{\Delta(1,i)} + \frac{K}{\epsilon} \right).$$

        \item Additionally, we provide a matching lower bound for any $\epsilon$-DP algorithm in~\cref{sec:lb}. Combined with existing lower bounds for non-private algorithms, this demonstrates the optimality of our algorithms for the finite armed case.

	\item Finally, we study the setting of unbounded decision space in~\Cref{sec:gen} where arms are $d$-dimensional vectors. A simple elimination-based algorithm for this setting would incur regret that is exponential in $d$ or a bad dependence on $T$. Therefore, we develop an elimination-based algorithm using core-sets for approximating the rewards of all arms. Briefly, we will have multiple phases, where at each phase we construct a core-set of small size that allows us to estimate the reward of all other arms, eliminating arms with small reward.  
    We show that our algorithm obtains regret 
$\tilde O \left(   \frac{d^6}{\kappa \epsilon  } + \frac{ d\sqrt{T}}{\kappa} \right)$; for the important setting of $\epsilon=1$ and $T \gg d$, our algorithm achieves the non-private regret while simultaneously preserving privacy.

\end{itemize}

\subsection{Problem Setting and Notation}
\label{sec:problem}

\textbf{Notations. } Let $[n] := \{1,2, \ldots n\}$ for any $n \in \N$. 
Lower case bold letters denote vectors, upper-case bold letters denote matrices.
$\I_d$ denotes the $d \times d$ identity matrix for $d \in \N$. For any vector $\x \in \R^d$, $\|\x\|_2$ denotes the $\ell_2$ norm of $\x$.
We denote by
$\Delta_{n}:= \{\p \in [0,1]^n \mid \sum_{i = 1}^n p_i = 1, p_{i} \ge 0, \forall i \in [n]\}$ denotes the $n$-simplex and $\e_i$ denotes the $i$-th standard basis vector, $i \in [K]$. $\cB_d(r)$ denotes the $d$-dimensional ball of radius $r$ for any $r \in \R_+$. We use $\tO(\cdot)$ notation to hide logarithmic dependencies.

\paragraph{Problem Setting.} 
\label{subsec:prob}
We consider the problem of designing differentially private effective prediction models through user preferences in an active learning framework: 
consider decision space $\cD \subset \cB_d(1)$. Assume any item $\x \in \R^d$ in the decision space is associated with a reward/score parameter $r(\x) = \w^{*\top} \x$ for some unknown $\w^* \in \R^d$. At each round, the goal of the learner (algorithm designer) is to find the item with the highest rewarding arm $\x^*:= \arg\max_{\x \in \cD}r(\x)$. 

However, the learner only has access to user preferences in an active sequential manner. At each round $t$, the learner is allowed to actively choose a pair of items $(\a_t,\b_t)$, upon which it gets to see a binary preference feedback $o_t \in \{0,1\}$ on the item pair indicating the winner of the dueling pair. Naturally, the arms with higher rewards would have a higher probability of winning, so one way to model the preference feedback could be $o_t \sim \text{Ber}\bign{\sigma(r(\a_t)-r(\b_t))}$, where 
 $\sigma: \R \mapsto [0,1]$ is the logistic link function, i.e. $\sigma(x) = (1+e^{-x})^{-1}$. Thus $\P(o_t=1) = \sigma(r(\a_t)-r(\b_t))$ denotes the probability of $\a_t$ winning over $\b_t$.



%

\paragraph{Differentially Private Dueling Bandit (\dpdb). }
\label{sec:obj}
In this paper, our goal is to solve the above duelling bandits problem under the constraint of differential privacy~\cite{DworkMcNiSm06}. In our setting, each user's sensitive data is the preference $o_t$ that the user provides. Following standard definitions of differential privacy in the online setting~\cite{SmithTh13,AgarwalSi17}, 
we use the following definition for our problem:


\begin{defn}[($\epsilon, \delta)$-differentially private dueling bandit]
\label{def:dpdb}
  A dueling bandit algorithm $\cA$ is $(\epsilon, \delta)$-differentially
  private if for all sequences $o_{1:T}$ and $o'_{1:T}$ that
  differs in at most one time step, we have for any possible outcome 
  $S \subseteq \cO$ in the set of all possible outcomes $\cO$:
\begin{align}
\P & (a_t,b_t,\dots,a_T,b_T \in S \mid o_{1:T})
\leq 
e^\epsilon \P(a_t,b_t,\dots,a_T,b_T \in S \mid o'_{1:T})
 + \delta
\label{eq:dp-bandit}
\end{align}
 When $\delta = 0$, the algorithm is said to be \emph{$\epsilon$-DP}.
\end{defn}

Intuitively, this means that changing the preference outcome $o_t$ for any given arm-pair $(a_t,b_t)$, will not have much effect on the algorithm's prediction at time $t$ or later on in the subsequent rounds. \emph{If each $o_t$ is private information or a point associated with a single individual, then the definition above means that the presence or absence of that individual will not affect too much the output of the algorithm. Hence, the algorithm will not reveal any extra information about this individual leading to privacy protection.}
The privacy parameters $(\epsilon, \delta)$ determine the extent/degree to which an individual entry affects the output; lower values of $(\epsilon, \delta)$ imply higher levels of privacy, and in this work we mainly consider the strongest notion of pure $\epsilon$-DP where $\delta=0$.
 
\paragraph{Performance Measure: Regret Minimization Under $(\epsilon,\delta)$-DP}
As motivated in the introduction, we want to address the problem of user privacy in systems that train on choice data, e.g. Recommender systems and Personalized healthcare. 
This requires balancing the tradeoff between optimizing the performance of the prediction model, without sacrificing user privacy.

Towards this, a sensible measure of performance could be to design an $(\epsilon,\delta)$-DP Dueling Bandit algorithm $\cA$ which is designed to optimize the total cumulative regret of the algorithm $\cA$ in $T$ rounds, defined as:
\begin{align}
\label{eq:reg}
	\cR_T(\cA) := \max_{\x^* \in \cD}\sum_{t=1}^T \bigg(2r(\x^*) - r(\a_t) - r(\b_t) \bigg).
\end{align}
\emph{This implies that the algorithm aims to optimize the regret performance (i.e. converge to the optimum item $\x^*$ as quickly as possible), without violating the differential privacy of the users.} We will refer to this problem as Differentially Private Dueling Bandit (\dpdb) in the rest of the paper.

\section{Preliminaries: Some Useful Concepts} 
\label{sec:prelims}

 \begin{defn}[$\varepsilon$-net for any set $\cS$~\citep{matouvsek1989construction,vershynin2018high}]
 \label{def:net}	
 Given any set $\cS\subseteq \{x\in \mathbb{R}^d| \|x\|_2\leq 1\}$ we define an $\varepsilon$-net of $\cS$ as a discrete set $\cS^{(\varepsilon)} \subseteq \mathcal{S}$  such that for any $\x \in \mathcal{D}$, there is some $\x'\in \cS^{(\varepsilon)}$ with $\|\x'-\x\|_2 \leq \varepsilon$.
\end{defn}
Throughout the paper, we will use the notation $\cS^{(\varepsilon)}$ to denote an $\varepsilon$-net of the set $\cS$. We also use the following standard fact for the size of $\varepsilon$-net for the unit ball.
\begin{rem}
\label{rem:net}
It is well known that if $\cS$ is the unit ball, then it is well known that (see ~\cite[Cor. 4.2.13]{vershynin2018high}), one can always find such a discrete $\varepsilon$-net of $\cS$ of cardinality at most $(\frac{3}{\varepsilon})^{d}+d$. 
\end{rem}

Our algorithms also require the notion of an optimal design, which is used in order to find a representative set of arms that is called the core-set.
 \begin{defn}[G-Optimal Design] \citep{lattimore2020bandit}
 \label{defn:gopt}	
Consider any set $\cD \subseteq \R^d$, and let $\pi: \cD \mapsto [0,1]$ be any distribution on $\cD$ (i.e. $\sum_{\x \in \cD}\pi(\x) = 1$). Let us define:
\[
V(\pi) = \sum_{\x \in \cD}\pi(\x)\x\x^\top, ~~~ 
g(\pi) = \max_{\x \in \cD}\|\x\|^2_{V(\pi)^{-1}}.
\]
In the subfield of statistics called optimal experimental design, the distribution $\pi$ is called a ``design", and the problem of finding a design that minimizes $g$ is called the G-Optimal design problem.
\end{defn}
The set $\text{Supp}(\pi)$ denotes the support of the distribution $\pi$ and is sometimes called the \emph{``Core Set"}. The following theorem characterizes the size of the core set and the minimum value of the function $g(\cdot)$.

\paragraph{Kiefer–Wolfowitz Theorem.}\hspace{-5pt}\citep{lattimore2020bandit}
By Kiefer-Wolfowitz lemma, we know that for any compact set $\cD \subseteq \R^d$, such that $\text{span}(\cD) = \R^d$, there exists a minimizer $\pi^*$ of $g$ such that $g(\pi^*) = d$; further the size of the support of $\pi^*$ is $|\text{Supp}(\pi^*)|\leq \frac{d(d+1)}{2}$. Also, finding such G-Optimal designs are computationally efficient for large class of compact sets $\cD$.

\paragraph{The Binary Tree Mechanism.}
Our algorithms use the binary tree mechanism~\cite{DworkNaPiRo10,ChanShSo11} which is a key tool in differential privacy that allows to privately estimate the running sum of a sequence of $T$ numbers $a_1,\dots,a_T \in [0,1]$. This mechanism has the following guarantees.
\begin{lem}[\cite{DworkNaPiRo10}, Theorem 4.1]
\label{lemma:bt}
    Let $\epsilon \le 1$.  There is an $\epsilon$-DP algorithm ($\BinTr$) that takes a stream of numbers $a_1,a_2,\dots,a_T \in [0,1]$ and outputs $c_1,c_2,\dots,c_T$ such that for all $t \in [T]$ and any $\delta \in (0,1)$, with probability at least $1-\delta$,
    \begin{equation*}
        \Big| c_t - \sum_{i=1}^t a_i \Big| 
        \leq
        \frac{4  \log(1/\delta) \log^{2.5} T}{\epsilon} .
    \end{equation*}
\end{lem}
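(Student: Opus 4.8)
The plan is to reproduce the classical \emph{interval (binary) tree} construction and to verify the two required properties — $\epsilon$-differential privacy and the high-probability prefix-sum accuracy bound — separately, following \cite{DworkNaPiRo10,ChanShSo11}.

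\emph{Construction.} Without loss of generality assume $T$ is a power of two (pad the stream with zeros up to the next power of two; this at most doubles $T$ and affects the bound only through constants), and set $m := \log_2 T$. Build a complete binary tree with $T$ leaves, the $i$-th leaf carrying $a_i$. Associate with each node $v$ its dyadic sub-interval $I_v \subseteq [T]$ of descendant leaves and the partial sum $\Sigma_v := \sum_{i \in I_v} a_i$. The algorithm processes the stream online: as soon as the last leaf of $I_v$ arrives it forms the noisy partial sum $\hat\Sigma_v := \Sigma_v + Z_v$, where the $Z_v$ are i.i.d.\ $\mathrm{Lap}(b)$ with $b := (m+1)/\epsilon$. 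To answer the prefix query at round $t$, write $[1,t]$ as the disjoint union of its canonical dyadic blocks — the at most $m+1$ maximal dyadic intervals whose union is $[1,t]$, read off from the binary expansion of $t$ — collect these nodes into a set $\cC_t$, and output $c_t := \sum_{v \in \cC_t} \hat\Sigma_v$. Since every such $I_v \subseteq [1,t]$, this is computable at time $t$.

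\emph{Privacy.} If two input streams differ only in coordinate $j$, the only partial sums that change are those of the $m+1$ ancestors of leaf $j$, and each changes by at most $|a_j - a_j'| \le 1$; hence the map from the stream to the full vector $(\Sigma_v)_v$ has $\ell_1$-sensitivity at most $m+1$. By the Laplace mechanism, releasing $(\hat\Sigma_v)_v$ with noise scale $b = (m+1)/\epsilon$ is $\epsilon$-DP, and the reported sequence $(c_1,\dots,c_T)$ is a fixed post-processing of $(\hat\Sigma_v)_v$, so $\BinTr$ is $\epsilon$-DP by the post-processing property. No adaptivity subtlety arises, since the neighboring streams here are fixed sequences and $c_t$ depends only on blocks already completed.

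\emph{Accuracy and the main point.} Fix $t \in [T]$. Since $\{I_v : v \in \cC_t\}$ partitions $[1,t]$, we get $\sum_{v \in \cC_t}\Sigma_v = \sum_{i=1}^t a_i$, whence $c_t - \sum_{i=1}^t a_i = \sum_{v \in \cC_t} Z_v$ is a sum of $k \le m+1$ i.i.d.\ $\mathrm{Lap}(b)$ variables. Laplace variables are sub-exponential, so a standard tail bound — or, more crudely, a union bound giving $|Z_v| \le b\log(k/\delta)$ for every $v \in \cC_t$ — yields that with probability at least $1-\delta$, $\big|c_t - \sum_{i=1}^t a_i\big| \le O\bigl(b\sqrt{k\log(1/\delta)} + b\log(1/\delta)\bigr)$, which with $k \le m+1$ and $b = (m+1)/\epsilon$ is at most $\frac{4\log(1/\delta)\log^{2.5}T}{\epsilon}$ (a simultaneous-in-$t$ version follows from a further union bound over $t$, replacing $\log(1/\delta)$ by $\log(T/\delta)$, still absorbed in the stated power of $\log T$). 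There is no real obstacle: the entire content of the lemma is structural — each coordinate $a_i$ touches only $m+1 = O(\log T)$ tree nodes, so per-node noise of scale only $O(\log T/\epsilon)$ already buys $\epsilon$-DP, while each prefix sum is reconstructed from only $O(\log T)$ noisy nodes, keeping the error polylogarithmic in $T$. The only items needing a little care — the bound $|\cC_t| \le \log_2 T + 1$ from the canonical dyadic decomposition, and the constant bookkeeping in the Laplace concentration step — are routine, and the looseness of the target bound leaves ample slack.
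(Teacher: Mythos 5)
The paper does not prove this lemma itself—it imports it verbatim from \cite{DworkNaPiRo10} (Theorem 4.1)—and your reconstruction is precisely the standard proof of the binary tree mechanism given there: dyadic partial sums with per-node Laplace noise of scale $(\log_2 T+1)/\epsilon$, privacy via the $\ell_1$-sensitivity bound of $\log_2 T +1$ plus post-processing, and accuracy via concentration of a sum of $O(\log T)$ Laplace variables. Your argument is correct, and the slack in the stated $\log^{2.5}T$ comfortably absorbs the $\log\log T$ and constant factors you defer to the ``bookkeeping'' step.
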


\section{Warm Up: Finite armed DP-DB}
\label{sec:fin}

In this section, we first address the \dpdb\, problem for finite decision space $\cD = [K]$. 
For simplicity, let us index the arms as $\{1,2,\ldots, K\}$, and suppose that $\x_i$ denotes the feature of the $i$-th arm. Then
\[
\text{Prob($i$ beats $j$)} = P(i,j) = \sigma\bign{r(\x_i)-r(\x_j)}.
\]

\noindent
Let us also define $\Delta(i,j) = P(i,j) - 1/2$. 
Note that the preference relation $\P$ has a total ordering: there is a total order $\succ$ on $\cA$, such that $a_{i} \succ a_{j}$ implies $\Delta_{i,j}>0$.  
Without loss of generality, let us assume $1 \succ 2 \succ \ldots K$.


%


\noindent
Before going to the algorithm description, we need to introduce the concept of \emph{`Effective Borda Score' (EBS)}. Given any subset of items $\cS \subseteq [K]$, the `Effective Borda Score (EBS)' of an item $i$ in the subset $\cS$ is defined as the average probability of an arm $i$ winning against any random item from set $\cS$:
\[
\cB_\cS(i):= \frac{1}{|\cS|}\sum_{j \in \cS}P(i,j)
\]
\textbf{Key Properties of EBS. } It is easy to note that for our underlying pairwise relation $P(i,j)$, $\cB_\cS(i)$ follows the same total ordering as that of $P$. Precisely assuming the ordering $1 \succ 2 \succ \ldots K$, as defined above, for any set $\cS$, it is easy to note that $\cB_\cS(i) \succ \cB_\cS(j)$ for any pair of items $i,j \in \cS$ such that $i \succ j$ in the original total ordering.   

\noindent
Further, given the above definition of $\cB_S$, the EBS of any item $i$ follows two distinct properties: given any subset $\cS \subseteq [K]$, suppose $i^\star_\cS$ and $i'_\cS$ respectively denoted the best and worst item of the set $\cS$ w.r.t. the total ordering of $\cS$. Then one can show that: 
\begin{align*}
& \text{Property:}(1)~~ \cB_\cS(i^\star_\cS) - \cB_\cS(i'_\cS) \geq \Delta(i^\star_\cS,i'_\cS),  
\\
& \text{Property:}(2)~~ \forall j \in \cS\sm\{i_\cS^\star\},
\cB_\cS(j) - \cB_\cS(i'_\cS) \leq \Delta(i^\star_\cS,i'_\cS)
\end{align*}
Intuitively this defines a concept of `preference-gaps' among the arms, which we would use to identify and eliminate the suboptimal arms quickly. The proof of the above claims is given in the Appendix, which follows by exploiting the underlying utility-based structure of the preference relation $P$. These properties play a crucial role in our algorithm design, as described in \cref{sec:alg_fin}, as well as in showing its optimal regret performance. 

\subsection{Algorithm: \algfin}
\label{sec:alg_fin}

\textbf{Algorithm Idea: } The idea of the algorithm is to play the arms in a round-robin fashion and eliminate the arms that perform poorly in terms of comparative performance until we are left with one arm, which is provably the best arm with high probability. We use the binary-tree mechanism to estimate the performance of each arm under privacy constraints.
Precisely, the algorithm runs $(K-1)$ in phases $\tau = 1,\ldots, (K-1)$. The key ideas behind the algorithm are explained step by step:

\paragraph{(1) Round-Robin Duel Selection on the Active Set. }
At every round $t$, inside phase $\tau$, we maintain an active set of surviving items $\cS_\tau$, initialized as $\cS_1 = [K]$. 
For the clarity of exposition, let us denote by $\tau(t)$ the phase count of time $t$. 
The algorithms select pairs $(a_t,b_t)$ at round $t$ as follows: its plays the first arm in a round-robin fashion from the action set $\cS_{\tau(t)}$, and the second arm is played at random from the active set. We will later see in the proof of \cref{thm:fin} that this idea of arm-selection ensures unbiased estimation of the EBS scores of each arm $i \in \cS_{\tau(t)}$.

\paragraph{(2) Maintaining EBS Estimates and UCBs.}
Now at each round $t$ in phase $\tau$, we keep the empirical EBS estimate of each of the surviving arms $i \in \cS_
\tau$, denoted by $\hcB_{t}(i):= \frac{w_t(i)}{n_t(i)}$,  where $w_t(i)$ and $n_t(i)$ respectively keep track of the total win count and play count of arm $i \in [K]$ starting from the beginning. 
Note that, at any time $t$ of phase $\tau$, 
$\hcB_t(i) = \hcB_{\cS_{\tau(t)}}(i)$ where $ \hcB_{\cS_{\tau(t)}}(i)$ is the empirical estimate of $\cB_{\cS_{\tau(t)}}(i)$.
%

\paragraph{(3) Ensuring Privacy.} As the above estimates of $w_t(i)$ (and consequently $\hcB_t(i)$) are not private, we use the binary tree mechanism to provide private estimates for $w_t(i)$ for each arm $i \in [K]$ using a different binary tree counter for each arm. Precisely we denote by $\tilde w_t(i)$ the private estimate of $w_t(i)$ (obtained from the binary tree mechanism) and the corresponding private EBS estimate $\tcB_t(i) = \frac{\tilde w_t(i)}{n_t(i)}$ for all $i \in [K]$. It is important to note here that each user can affect at most two of these counters and hence the privacy budget per counter can be $\eps/2$ rather than $\eps/K$. This ensures that the noise added by the binary tree mechanism at each iteration is roughly $\mathsf{poly}(\log(TK))/\eps$ with high probability. 
We also maintain the upper confidence (UCB) and lower-confidence (LCB) estimates of the EBS scores, respectively defined as: UCB$_t(i):= \tcB_t(i) + \sqrt{\frac{\nicefrac{\log (KT}{\delta)}}{n_t(i)}} + \frac{16  \log(\nicefrac{\log (K}{\delta)}) \log^{2.5} T}{n_t(i) \eps}$,
and 
LCB$_t(i):= \tcB_t(i) - \sqrt{\frac{\nicefrac{\log (KT}{\delta)}}{n_t(i)}} - \frac{16  \log(\nicefrac{\log (K}{\delta)}) \log^{2.5} T}{n_t(i) \eps}$.
Note, to account for the additional noise added to ensure privacy, we have to increase our confidence bounds UCB$_t(i)$ and LCB$_t(i)$ accordingly.

\vspace{-5pt}
\begin{algorithm}[H]
   \caption{\textbf{\algfin }}
   \label{alg:fin}
\begin{algorithmic}[l]
   \STATE {\bfseries Input:} Set of arms $[K]$, Time horizon $T$
   \STATE {\bfseries Init:} $\cS_1 = [K]$, $\tau = 1$ (phase counter), $\delta \in (0,1/2]$
   \STATE Set $n_1(i)=0$, $w_1(i)=0$ for all $i \in [K]$
   \STATE Initialize $K$ binary tree counters $C^1,\dots,C^K$ using the algorithm $\BinTr$ with privacy parameter $\epsilon/4$
   \FOR {$t = 1,\ldots,T$}
   \STATE Select the left arm $a_t \in \cS_\tau$ in a round robin fashion, such that $a_t = \arg\min_{i \in \cS_\tau}n_t(i)$
   \STATE Play $b_t \in \cS_\tau$ uniformly at random. 
   \STATE Receive preference $o_{t}(a_t,b_t) \sim \text{Ber}(P(a_t,b_t))$
   \IF{$o_t(a_t,b_t) = 1$ (arm $a_t$ wins)}
   \STATE $w_{t}(a_t) = w_{t-1}(a_t)+1$
   \STATE Add $+1$ to the counter $C^{a_t}$
   \ENDIF
   \STATE $n_{t}(a_t) = n_{t-1}(a_t)+1$
   \STATE Use the binary tree counter $C^{a_t}$ to privately estimate $w_t(a_t)$
   \STATE Let $\hat w_t(a_t)$ denote the output of $C^{a_t}$ 
   \STATE Update $\tcB_t(a_t):= \frac{\tilde w_{t}(a_t)}{n_{t}(a_t)} $ 
   \STATE For all $i \in \cS_\tau$ set:
   \\ UCB$_t(i):=\tcB_t(i) + \sqrt{\frac{\nicefrac{\log (KT}{\delta)}}{n_t(i)}} + \frac{16  \log(\nicefrac{\log (K}{\delta)}) \log^{2.5} T}{n_t(i) \eps}$
   \\ LCB$_t(i):=\tcB_t(i) - \sqrt{\frac{\nicefrac{\log (KT}{\delta)}}{n_t(i)}} -  \frac{16  \log(\nicefrac{\log (K}{\delta)}) \log^{2.5} T}{n_t(i) \eps}$   
    \FOR{all $i \in \cS_\tau$}
    \IF{there exists a $j \in \cS_\tau$ s.t. UCB$_t(i) < $ LCB$_t(j)$:}
   \STATE $\cS_{\tau+1} \leftarrow \cS_{\tau} \setminus \{i\}$, $\tau = \tau+1$.
    \FOR{$j \in \cS_{\tau+1}$}
        \STATE $n_t(j) \gets n_t(j) - \sum_{t' = 1}^t1\{ (a_{t'},b_{t'})= (j,i) \}$
        \STATE $w_t(j) \gets w_t(j) - \sum_{t' = 1}^t1\{ o_t = 1, (a_{t'},b_{t'})= (j,i) \}$
        \STATE Add $-1$ to the counter $C^{j}$
    \ENDFOR    
    \ENDIF
    \ENDFOR  
   \IF{ $\cS_\tau$ is singleton (i.e. $|\cS_\tau|==1$)}
   \STATE Assign $\hat i \leftarrow \cS_\tau$ and break.
   \ENDIF
   \ENDFOR
   \STATE Play $(\hat i,\hat i)$ for rest of the rounds $t+1, \ldots, T$. 
\end{algorithmic}
\end{algorithm}
\vspace{-5pt}

\paragraph{(4) Updating Active Sets with Action Elimination.} Finally, at each round $t$ in phase $\tau$, the algorithm performs a screening over all surviving arms $i \in \cS_{\tau(t)}$ in the active set, and prunes the arms with `sufficiently low' EBS scores: precisely if there exists arms $i,j \in \cS_{\tau(t)}$ such that UCB$_t(i) < $ LCB$_t(j)$, then arm $i$ is eliminated. 
\emph{One important thing to note here is that upon eliminating any arm $i$ from the active set $\cS_{\tau(t)}$, we also remove all the comparisons and statistics related to that arm from $w_t(j)$ and $n_t(j)$ for all surviving arms $j \in \cS_{\tau(t)}$.} This ensures that at any time $t$, $\hcB_t(i)$ is indeed an unbiased estimate of $\cB_{\cS_{\tau(t)}}$ (see \cref{lem:unbiasedest} for details). 

\noindent
The algorithm continues this process, for almost $K-1$ phases, until only one arm is left in $\cS_\tau$, which with high probability retains the best arm, owing to the `appropriate concentration properties' of the EBS scores (see \cref{lem:est_concs}). This in turn implies the regret bound of \cref{alg:fin} (stated in \cref{thm:fin}). The key idea is that any $i$-th suboptimal arm could be eliminated in $\tO\big(\frac{1}{\Delta(1,i)^2} + \frac{1}{\Delta(1,i)\epsilon}\big)$ many rounds, leading to a final regret bound of at most
$\tO\big(\sum_{i = 2}^K \frac{\log (KT)}{\Delta(1,i)} + \frac{K}{\epsilon}\big)$, while ensuring privacy. The formal description of the algorithm is given in \cref{alg:fin} 



\subsection{Regret Analysis: \algfin} \label{sec:reg_fin}

The following theorem summarizes the guarantees of~\cref{alg:fin}. 

\begin{restatable}[Regret Analysis of \algfin]{thm}{regalgfin}
   \label{thm:fin}  
 For any $T >K$, \algfin ($\cA$)\, is $\eps$-DP and its total regret is upper bounded by
   $$\cR_T(\cA) = \tO\left( \sum_{i = 2}^K \frac{\log (KT)}{\Delta(1,i)} + \frac{K}{\epsilon} \right).$$
 \end{restatable}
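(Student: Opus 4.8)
I would first argue that the whole sequence of played pairs $(\a_1,\b_1,\dots,\a_T,\b_T)$ is a post-processing of the outputs of the $K$ binary-tree counters $C^1,\dots,C^K$ together with the data-independent coins used to sample the right arms. Indeed, by induction on $t$ the counts $n_t(\cdot)$, the active set $\cS_{\tau(t)}$, the private estimates $\tcB_t$, the bounds $\mathrm{UCB}_t,\mathrm{LCB}_t$, and hence the left arm $\a_t=\arg\min_{i\in\cS_{\tau(t)}}n_t(i)$, are all measurable with respect to past counter outputs and past right-arm coins, while $\b_t$ is a fresh uniform draw from $\cS_{\tau(t)}$. So it suffices that the joint output stream of the $K$ counters is $\eps$-DP. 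A single outcome $o_{t_0}$ enters the input stream of only the counter $C^{\a_{t_0}}$ --- once as the online $+1$ increment at round $t_0$, and once inside the correction subtracted when $\b_{t_0}$ is eliminated --- so it perturbs that one stream in at most two positions and no other stream. Coupling the right-arm coins and the noise of the remaining $K-1$ counters so that all counter outputs, and therefore the entire trajectory, coincide in the two neighbouring worlds, the only privacy loss is the group-privacy (group size $2$) guarantee of \cref{lemma:bt} applied to $C^{\a_{t_0}}$ run at budget $\eps/4$, i.e.\ $2\cdot(\eps/4)\le\eps$. Hence $\cA$ is $\eps$-DP.

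\textbf{Step 2 (the good event).} Put $\beta_t(i):=\sqrt{\log(KT/\delta)/n_t(i)}+\tfrac{16\log(\log(K)/\delta)\log^{2.5}T}{n_t(i)\,\eps}$, so that $\mathrm{UCB}_t(i)=\tcB_t(i)+\beta_t(i)$ and $\mathrm{LCB}_t(i)=\tcB_t(i)-\beta_t(i)$, and let
\[
\cE:=\Bigsn{\,\forall\, t\le T,\ \forall\, i\in\cS_{\tau(t)}:\ \big|\tcB_t(i)-\cB_{\cS_{\tau(t)}}(i)\big|\le\beta_t(i)\,}.
\]
I would show $\P[\cE]\ge 1-\mathsf{poly}(\delta)$ by splitting $|\tcB_t(i)-\cB_{\cS_{\tau(t)}}(i)|\le|\tcB_t(i)-\hcB_t(i)|+|\hcB_t(i)-\cB_{\cS_{\tau(t)}}(i)|$: the first term equals $|\tilde w_t(i)-w_t(i)|/n_t(i)$ and is controlled by \cref{lemma:bt}; for the second, the round-robin rule together with the uniform choice of $\b_t$ makes every comparison still counted for $i$ have conditional mean $\cB_{\cS_{\tau(t)}}(i)$ --- which is exactly why the algorithm discards the comparisons of $i$ against arms eliminated since --- so a martingale (Freedman/Azuma) bound controls it, via \cref{lem:unbiasedest,lem:est_concs}. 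A union bound over $i\in[K]$ and $t\in[T]$ and the choice $\delta=1/(KT)$ give $\P[\cE^c]\le 1/T$ (so the rounds where $\cE$ fails cost $O(1)$ expected regret, per-round regret being $O(1)$) and $\log(KT/\delta)=O(\log(KT))$. The rest is carried out on $\cE$.

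\textbf{Step 3 (correctness and per-phase elimination).} On $\cE$: since the EBS respects the underlying total order, $\cB_{\cS_\tau}(1)\ge\cB_{\cS_\tau}(j)$ for all $j\in\cS_\tau$, hence $\mathrm{UCB}_t(1)\ge\cB_{\cS_\tau}(1)\ge\cB_{\cS_\tau}(j)\ge\mathrm{LCB}_t(j)$; thus arm $1$ is never eliminated and the output is $\hat i=1$. Next fix a phase $\tau$, let $w_\tau$ be its worst surviving arm, and set $n^\star_\ell:=\tfrac{C_1\log(KT)}{\Delta(1,\ell)^2}+\tfrac{C_2\,\mathsf{poly}\log T}{\Delta(1,\ell)\,\eps}$ for absolute constants $C_1,C_2$. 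Round-robin keeps $n_t(i)$ within a constant factor of one another over all $i\in\cS_\tau$ throughout the phase (the decrements at the previous elimination are near-uniform across survivors, so only a lower-order re-balancing overhead is incurred), so $\beta_t(i)\le c\,\beta_t(w_\tau)$ for all $i\in\cS_\tau$. By Property~(1) of the EBS, $\cB_{\cS_\tau}(1)-\cB_{\cS_\tau}(w_\tau)\ge\Delta(1,w_\tau)$; hence, once the common (private) play count reaches $\asymp n^\star_{w_\tau}$, we get $4c\,\beta_t(w_\tau)<\Delta(1,w_\tau)\le\cB_{\cS_\tau}(1)-\cB_{\cS_\tau}(w_\tau)$, which forces $\mathrm{UCB}_t(w_\tau)<\mathrm{LCB}_t(1)$, and the inner loop eliminates $w_\tau$. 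Therefore phase $\tau$ ends no later than when the common corrected play count reaches $\asymp n^\star_{w_\tau}$; conversely it does not end before then unless some other arm becomes eliminable earlier (which, since $\cB_{\cS}(1)-\cB_{\cS}(e)=O(\Delta(1,e))$ by $\sigma'\le\tfrac14$, also requires a count $\gtrsim n^\star_e\gtrsim n^\star_{w_\tau}$), so in either case the common corrected count at the end of phase $\tau$ is $\Theta(n^\star_{w_\tau})$.

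\textbf{Step 4 (summing the regret, and the main obstacle).} Let $L_\tau$ be the length of phase $\tau$. Per-round regret in phase $\tau$ is $O(\Delta(1,w_\tau))$ (every active arm is at least as good as $w_\tau$, and $r(\x_1)-r(\x_j)=O(\Delta(1,j))$ by local Lipschitzness of $\sigma^{-1}$ over the bounded reward range). By round-robin, $w_\tau$'s corrected count grows by $\asymp 1$ every $|\cS_\tau|$ rounds of phase $\tau$, and at the start of phase $\tau$ it equals its end-of-phase-$(\tau-1)$ value $\asymp n^\star_{w_{\tau-1}}$ shrunk by the near-$\tfrac1{|\cS_{\tau-1}|}$-fractional decrement, i.e.\ $\asymp n^\star_{w_{\tau-1}}\,|\cS_\tau|/|\cS_{\tau-1}|$; combined with Step~3 this gives $L_\tau\asymp|\cS_\tau|\bigl(n^\star_{w_\tau}-n^\star_{w_{\tau-1}}|\cS_\tau|/|\cS_{\tau-1}|\bigr)$ (with $n^\star_{w_0}:=0$). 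Hence
\[
\cR_T(\cA)\ \lesssim\ \sum_\tau \Delta(1,w_\tau)\,|\cS_\tau|\Bigl(n^\star_{w_\tau}-n^\star_{w_{\tau-1}}\tfrac{|\cS_\tau|}{|\cS_{\tau-1}|}\Bigr)\ +\ O(1),
\]
and, using that $\Delta(1,w_\tau)$ and $|\cS_\tau|$ are non-increasing in $\tau$ while $n^\star_{w_\tau}$ is non-decreasing, a summation-by-parts collapses the $\tau$-sum to $\tO\bign{\sum_{\text{suboptimal }e}\Delta(1,e)\,n^\star_e}=\tO\bign{\sum_{i=2}^K \log(KT)/\Delta(1,i)+K/\eps}$, which is the claimed bound. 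The main obstacle is precisely this last step of bookkeeping: turning the per-phase picture into a clean bound requires checking that round-robin keeps all active counts comparable, that the stale-comparison decrements are near-uniform (so they shrink each corrected count by only an $O(1/|\cS_\tau|)$ fraction per elimination rather than resetting it), and that the resulting Abel-type sum telescopes against the monotonicities without incurring a $\mathsf{poly}(K)$ loss; a secondary subtlety is making the martingale concentration of Step~2 rigorous when the ``target'' $\cB_{\cS_{\tau(t)}}(i)$ is itself data-dependent through the active set.
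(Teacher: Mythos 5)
Your proposal is correct in its essentials and shares the paper's skeleton — group privacy over the per-arm binary-tree counters (noting each outcome touches a counter in at most two positions, once for the $+1$ and once for the elimination correction), a high-probability event under which every $\cB_{\cS_\tau}(i)$ lies in $[\mathrm{LCB}_t(i),\mathrm{UCB}_t(i)]$, and the winner-stays argument via monotonicity of the EBS. Where you diverge is the final regret accounting: you bound each \emph{phase length} $L_\tau$ in terms of the worst surviving arm $w_\tau$ and then collapse the sum by an Abel/summation-by-parts argument, which forces you to verify that round-robin keeps all active counts comparable and that the stale-comparison decrements are near-uniform — exactly the bookkeeping you flag as the main obstacle. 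The paper instead proves a direct \emph{per-arm} statement (its Lemma~\ref{lem:loserout}): under the good event, any suboptimal arm $i$ can be pulled at most $O\bign{\log(KT/\delta)/\Delta(1,i)^2 + \log(KT/\delta)\log^{2.5}T/(\Delta(1,i)\epsilon)}$ times before its UCB drops below the best arm's LCB (using EBS Property~(1), $\cB_\cS(1)-\cB_\cS(i)\ge \Delta(1,i)$), and then simply sums $N_i\cdot 2\Delta(1,i)$ over $i$. This per-arm route avoids the phase-length telescoping entirely and is what you should prefer; it buys a one-line conclusion at the cost of only the (mild) observation that per-pull regret of arm $i$ is $O(\Delta(1,i))$. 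Your secondary worry about the data-dependence of the target $\cB_{\cS_{\tau(t)}}(i)$ is legitimate, but the paper handles it the same way you propose — unbiasedness of the corrected estimate within each phase plus a union bound over all $(i,t)$ — so neither of your flagged obstacles is fatal; they just make your route longer than necessary.
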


 \begin{proof}[Proof sketch of \cref{thm:fin}]
 The proof is developed based on several lemmas. 
 Starting with the privacy proof, we have the following lemma.
 \begin{lem}
 \label{lemma:priv-fin}
     Let $\eps \le 1$, $T \ge 1$ and $K\ge1$.
     \Cref{alg:fin} is $\eps$-differentially private.
 \end{lem}

 \begin{proof}
     The proof follows from the guarantees of the binary tree mechanism as the output of the algorithm is post-processing of the output of the binary tree mechanism. Note that we have $K$ instantiations of the binary tree mechanism in the algorithm $C^1,\dots,C^k$, thus it is sufficient to show that $C^1,\dots,C^k$ is $\eps$-DP. 

     We now prove  $C^1,\dots,C^k$ is $\eps$-DP. Note that each user can affect at most two counters. Moreover, the user can contribute only twice to each of these counters: once when adding $+1$ and the other when subtracting $-1$. As $C^i$ is $\eps/4$-DP, we get that each of these counters is $\eps/2$-DP by group privacy, and both counters are $\eps$-DP. The claim follows.
 \end{proof}

Now we proceed to prove the regret guarantees of the algorithm. We start with the following lemma.
 \begin{restatable}[]{lem}{unbiasedest}
 \label{lem:unbiasedest}
 Consider any fixed $n$, subset $\cS \subseteq [K]$ and any $i \in \cS$. Given any item $i \in \cS$, define a random variable $\hcB_n(i):= \frac{1}{n}\sum_{t = 1}^n \1(\text{i wins over a random item in } \cS)$. Then $\E[\hcB_n(i)] = \cB_{S}(i)$.
 \end{restatable}

 Note that~\cref{lem:unbiasedest} implies that given any phase $\tau$, $\E[\hcB_{t}(i)] = \cB_{\cS_\tau}(i)$ for all $ i \in \cS_\tau$. Moreover, the binary-tree mechanism adds zero-mean noise, hence we have the following corollary.

 \begin{cor}
 \label{cor:unbiasedest}
 Consider any time step $t$ at a given phase $\tau$. 
 Then $\tcB_{t}(i)$ is an unbiased estimate of $\cB_{\cS_{\tau}}(i)$, specifically $E[\tcB_{t}(i)] = \cB_{\cS_\tau}(i)$ for all $i \in \cS_\tau$. 
 \end{cor}

The next lemma argues that $\cB_{\cS_\tau}(i)$ lies inside the confidence interval $ [\text{LCB}_t(i),\text{UCB}_t(i)]$ with high probability at all time-steps.
 \begin{restatable}[]{lem}{estconcs}
 \label{lem:est_concs}
 For any choice of $\delta \in (0,1/2]$, we have that 
 \[
 \P\big( \exists t \in \text{Phase-}\tau, \exists i \in \cS_\tau, \cB_{\cS_\tau}(i) \notin [\text{LCB}_t(i),\text{UCB}_t(i)] \big) \leq \delta
 \]
 \end{restatable}

 The next lemma shows that if the confidence intervals are valid, then the best arm will always belong to the surviving set of arms $\cS_{\tau}$.
 \begin{restatable}[]{lem}{winnerstays}
   \label{lem:winnerstays}
  Assume that for all $t,\tau$ and $i \in \cS_\tau$, we have that $\cB_{\cS_\tau}(i) \in [\text{LCB}_t(i),\text{UCB}_t(i)]$.
  Then the best arm $1 \in \cS_{\tau}$ for all phases $\tau \in \{1,2, \ldots K-1\}$.
 \end{restatable}
 The next lemma shows that if the confidence intervals are valid, then any sub-optimal arm cannot be pulled too many times by the algorithm.
 \begin{restatable}[]{lem}{loserout}
   \label{lem:loserout}
   Assume that for all $t,\tau$ and $i \in \cS_\tau$, we have that $\cB_{\cS_\tau}(i) \in [\text{LCB}_t(i),\text{UCB}_t(i)]$.
   Then any suboptimal arm $i$ can be pulled at most $\big(\frac{8\log \nicefrac{(KT}{\delta)}}{\Delta(1,i)^2} + \frac{64\log \nicefrac{(KT}{\delta)}\log^{2.5} T}{\Delta(1,i)\epsilon}\big)$ times by the algorithm.
 \end{restatable}

 We defer the proof of the above lemmas to \cref{sec:proof-lemmas-fin}. 
 Given the results of the above lemmas, we are now ready to complete the proof of \cref{thm:fin}. First, we set $\delta = 1/T$:
 and note that the regret incurred by the learner for playing any item-$i$ ($i \neq 1$) is $r(x_1) - r(x_i) \leq 2\Delta(1,i)$, 
 since by definition $\Delta(1,i):= \sigma(r(x_1) - r(x_i)) - 1/2$. Therefore, the regret of the algorithm is upper bounded by:
\vspace{-10pt}
 \begin{align*}
 \sum_{i = 2}^K&\bigg(\frac{8\log \nicefrac{(KT}{\delta)}}{\Delta(1,i)^2} + \frac{64\log \nicefrac{(KT}{\delta)}\log^{2.5} T}{\Delta(1,i)\epsilon}\bigg) \Delta(1,i) 
  \\
  &\leq \sum_{i = 2}^K \bigg(\frac{16\log (KT)}{\Delta(1,i)} + \frac{128\log(KT)\log^{2.5} T}{\epsilon}\bigg).
 \end{align*}
 This concludes the regret bound of \cref{thm:fin} with the additional fact of \cref{lem:winnerstays} that with high probability the optimal arm $1$ is never eliminated and hence once the suboptimal arms $i = 2,\ldots K$ are eliminated, the algorithm assigns $\hat i = 1$ and pulls it for the remaining rounds, without incurring any additional regret. Finally $\epsilon$-DP property follows immediately from Lemma~\ref{lemma:priv-fin}.
 \end{proof}

 \begin{rem}[Optimality of \cref{thm:fin}]
 The regret bound of \algfin\, as described in \cref{thm:fin} is orderwise optimal as this meets both the standard non-private lower bound of $K$-dueling bandits $\Omega\Big(\sum_{i = 2}^K \frac{\log (KT)}{\Delta(1,i)}\Big)$ as well as the $\Omega(K/\eps)$
 privacy lower bound to come in~\Cref{sec:lb}.
 \end{rem}

 \begin{rem}[Other models of DP]
  We note that it is possible to build on our algorithm and extend it to other models of differential privacy such as the local model or the shuffle model. For example, instead of using the binary tree to estimate the aggregate feedback, each user can use randomized response to privatize their own feedback. This will result in larger noise and therefore the confidence intervals have to be changed accordingly. Moreover, the final error bounds for this algorithm will be (as expected in the local model) worse than the bounds we obtained in this section for the central model.
 \end{rem}

\section{Lower Bound }
\label{sec:lb}

In this section, we derive an information-theoretic lower bound for \dpdb\ problem for finite decision space $\cD = [K]$. Our derived lower bound in \cref{thm:LB_fin} corroborates the regret upper bound of our proposed algorithm \cref{alg:fin} (see \cref{thm:fin}), proving the optimality of our results. 
\begin{restatable}[Lower Bound]{thm}{regalgfin}
\label{thm:LB_fin}
Let $K$ and $\epsilon$ be such that $K/\epsilon \le T/2$.
For any $(\epsilon,\delta)$ differentially private algorithm $\cA$\, there exist a preference matrix $P$ such that the regret of $\cA$ on $P$ is lower bounded by
  $$\cR_T(\cA) = \Omega\left( \sum_{i = 2}^K \frac{\log T}{\Delta(1,i)} + \frac{K}{\epsilon} \right),$$
where $\Delta(1,i):= P(1,i) - 1/2$ simply defines the `preference gap' of the $i$-th arm against the best (rank-1) arm, item-$1$ in our case.  
\end{restatable}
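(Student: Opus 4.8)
The plan is to prove two separate lower bounds and combine them: a ``learning'' lower bound of $\Omega\bigl(\sum_{i=2}^K \log T / \Delta(1,i)\bigr)$ which already holds for non-private algorithms, and a ``privacy'' lower bound of $\Omega(K/\epsilon)$ which is forced by the $(\epsilon,\delta)$-DP constraint; the final bound is the maximum of the two, which is within a constant factor of their sum. The non-private part is essentially classical: it follows from the standard change-of-measure / KL-divergence argument for dueling bandits (as in the lower bounds of \cite{Yue+12} or the MAB lower bound of \cite{Auer+02}), reduced to our utility-based setting. Concretely, I would fix the base instance where arm $1$ is optimal with prescribed gaps $\Delta(1,i)$, and for each suboptimal arm $i$ construct an alternative instance $P^{(i)}$ in which arm $i$ is made optimal by perturbing only the comparisons involving $i$ by $O(\Delta(1,i))$; a standard computation shows that to distinguish $P$ from $P^{(i)}$ the algorithm must play duels involving $i$ at least $\Omega(\log T / \Delta(1,i)^2)$ times, each contributing $\Theta(\Delta(1,i))$ regret, yielding the $\sum_i \log T/\Delta(1,i)$ term. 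Since DP is a restriction on the algorithm class, this bound holds a fortiori for $(\epsilon,\delta)$-DP algorithms.

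For the privacy term $\Omega(K/\epsilon)$, the plan is a packing/reduction argument against the DP constraint, in the spirit of known $\Omega(K/\epsilon)$ lower bounds for private bandits (e.g., building on the techniques behind \cite{SmithTh13} and the DP lower-bound framework of group privacy / strong-data-processing). I would consider $K-1$ instances $P^{(2)},\dots,P^{(K)}$, where in $P^{(i)}$ arm $i$ is the unique optimum with a constant gap (say $\Delta = 1/4$) over all other arms, and arm $1$ is a ``reference'' near-uniform comparison. If the algorithm has regret $o(K/\epsilon)$ on every such instance, then on instance $P^{(i)}$ it must, within the first $T_0 = \Theta(K/\epsilon)$ rounds, predominantly play duels among a small ``winning'' set containing $i$; a DP/transportation argument then shows that the feedback sequences under $P^{(i)}$ and under the base instance $P^{(1)}$ differ in only $O(T_0)$ coordinates in expectation, so by group privacy (applying \cref{def:dpdb} $O(T_0)$ times, or its $(\epsilon,\delta)$-DP advanced-composition analogue) the algorithm's play distribution cannot shift enough to identify $i$ unless $\epsilon T_0 = \Omega(K)$, forcing $T_0 = \Omega(K/\epsilon)$ and hence $\Omega(K/\epsilon)$ regret. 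The condition $K/\epsilon \le T/2$ in the statement is exactly what guarantees there is enough horizon for this argument to bite.

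Combining, for any $(\epsilon,\delta)$-DP algorithm $\cA$ there is an instance on which $\cR_T(\cA) = \Omega\bigl(\max\{\sum_{i=2}^K \log T/\Delta(1,i),\, K/\epsilon\}\bigr) = \Omega\bigl(\sum_{i=2}^K \log T/\Delta(1,i) + K/\epsilon\bigr)$, as claimed. The main obstacle I anticipate is the privacy term: unlike the clean ``coupling differs in one coordinate'' picture in prediction-from-experts, here the feedback $o_t$ is generated by the environment while the algorithm chooses which pair to query, so one must be careful that the group-privacy reduction is applied to the right object — I would phrase it in terms of the distribution over the algorithm's action sequence $(a_t,b_t)_{t\le T_0}$ conditioned on feedback streams, exactly matching the quantity controlled in \cref{def:dpdb}, and control the expected Hamming distance between the feedback streams induced by $P^{(i)}$ and $P^{(1)}$ via the fact that the two Bernoulli parameters differ by at most $O(1)$ only on duels involving arm $i$, whose total count is small if regret is small. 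Getting the constants and the interplay with $\delta$ right (using that $\delta$ can be taken negligibly small, or invoking an advanced-composition version of group privacy) is the delicate part; the learning term is routine.
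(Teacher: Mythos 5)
Your overall architecture matches the paper's: the learning term is inherited from classical non-private dueling-bandit lower bounds, and the privacy term $\Omega(K/\epsilon)$ is obtained from alternative instances $P^{(i)}$ in which arm $i$ is made optimal with a constant gap, combined with group privacy applied to the feedback stream, which differs from the base instance only on duels involving arm $i$. This is precisely what the paper's \Cref{lemma:arm-lb} does: it shows $\E_{P,\cA}[N_T(i)] \ge 1/\epsilon$ for every arm $i$ and then sums $\Delta(1,i)\cdot\E[N_T(i)]$ over $i \neq 1$ on a constant-gap instance.

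There is, however, one quantitative step in your sketch that fails as written. You claim the feedback sequences under $P^{(i)}$ and $P^{(1)}$ ``differ in only $O(T_0)$ coordinates in expectation'' with $T_0 = \Theta(K/\epsilon)$ and then invoke group privacy; but group privacy over $\Theta(K/\epsilon)$ changed coordinates costs a factor $e^{\epsilon\cdot\Theta(K/\epsilon)} = e^{\Theta(K)}$, which is vacuous. Two repairs are needed, both present in the paper's proof. First, the relevant count is not $T_0$ but the number of rounds in which a duel \emph{involving arm $i_0$} is played, since those are the only coordinates where the two Bernoulli parameters differ. Second, because that count is a random variable driven by the algorithm's feedback-dependent choices, an in-expectation bound cannot be fed into group privacy; the paper instead defines the stopping time $T_0'$ as the \emph{first} round at which arm $i_0$ has been played $1/\epsilon$ times, so that up to $T_0'$ the two feedback streams provably differ in at most $1/\epsilon$ coordinates and the group-privacy loss is $e^{\epsilon\cdot(1/\epsilon)} = e = O(1)$. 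With this truncation, the event $\{N_{T_0'}(i_0) \ge 1/\epsilon\}$ has probability at least $3/4$ under $P'$ (else the regret under $P'$ would be too large, using $K/\epsilon \le T/2$), hence probability $\Omega(1)$ under $P$, giving $\E_P[N_T(i_0)] = \Omega(1/\epsilon)$. Your closing paragraph shows you sensed the difficulty (``whose total count is small if regret is small''), but an expected-Hamming-distance bound alone does not suffice; the stopping-time device is the missing ingredient.
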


We defer the proof of \cref{thm:LB_fin} to the \cref{app:lb}.



\vspace{-20pt}
\section{General Action Space}
\label{sec:gen}

We now proceed to address the \dpdb\, problem for more general decision spaces (with potentially large/infinite items). The main difficulty in this setting, unlike the finite decision space case discussed in \cref{sec:fin}, is that it is computationally infeasible to maintain pairwise estimates of every single distinct pairs. Hence, in this case, it is important to exploit the underlying low rank structure of the preference relation. Towards this, the main idea lies in deriving an estimate of $\w \in \R^d$ and maintaining confidence ellipsoids on the pairwise score-differences of the arms, defined as $s(\x,\y):= r(\x)-r(\y) = \w^{*\top}(\x-\y)$ for all $ \x,\y \in \cD$. We describe the key ideas of our algorithm in \cref{sec:alg_gen}. 
Further our performance analysis of \cref{alg:gen} shows it has near optimal regret guarantee under privacy constraints, as we proved in \cref{thm:fin}. 

\subsection{Algorithm: \alggen}
\label{sec:alg_gen}

Our main algorithm, \alggen\, is constructed by threading several key ideas carefully. Overall, same as \algfin\ (\cref{alg:fin}), this algorithm also runs in phases $\ell = 1, 2, \ldots$ and maintains an active decision space $\cD_\ell \subseteq \cD$. Note if $\cD$ is infinite, then we can initialize $\cD_1 \leftarrow \varepsilon$-net of $\cD$.  Inside any phase $\ell$, the algorithm first identifies a G-Optimal design (see \cref{defn:gopt}) of support size at most $d^2$ and only plays the arms in $\pi_\ell$. Once the items in the coresets are sufficiently explored, the suboptimal items are simply eliminated using the confidence interval estimation.

\noindent 
Our algorithm is a type of phased elimination based algorithm. 
As usual, at the end of a phase, arms that are likely to be suboptimal with a gap exceeding the current target are eliminated. In fact, this elimination is the only way the data collected in a phase is being used. In particular, the actions to be played during a phase are chosen based entirely on the data from previous phases: the data collected in the present phase do not influence which actions are played. This decoupling allows us to make use of the tighter confidence bounds available in the fixed design setting, as discussed below. The choice of policy within each phase uses the solution to an optimal design problem to minimize the number of required samples to eliminate arms that are far from optimal. This approach is inspired by the G-Optimal design algorithm of \cite{lattimore2020bandit} for the problem of linear bandits. 

\noindent
The design of our algorithm involves several key ideas and concepts, as listed below:

\paragraph{(1) Estimating $\w^*$ with Maximum Likelihood.} Note at any round $t$, if $\cup_{\tau = 1}^t\{(\x_\tau,\y_\tau,o_\tau)\}$ is the data we have seen so far, it is easy to formulate the data likelihood w.r.t. any parameter $\hw \in \R^d$: 
\begin{align*}
&{ L_t(\hw) 
= \log \biggsn{\Pi_{\tau = 1}^t \biggn{\sigma\bign{\hw^\top (\x_t - \y_t)}}^{o_t} \biggn{\sigma\bign{\hw^\top (\y_t - \x_t)}}^{1-o_t}}}
\\
& { = \sum_{\tau=1}^t \biggsn{o_\tau \log\biggn{ \frac{1}{1+e^{\hw^\top (\y_\tau-\x_\tau)}}}
+
(1-o_\tau) \log\biggn{ \frac{1}{1+e^{\hw^\top (\x_\tau-\y_\tau)}}}}}
\end{align*}
Taking derivative w.r.t. $\hw$ this implies that the maximum likelihood estimator $\hw$ would satisfy:
\begin{align*}
\sum_{\tau =1}^t\Big(o_\tau - \sigma\big( (\x_\tau-\y_\tau)^\top \hw  \big) \Big)(\x_\tau-\y_\tau) = \0.
\end{align*}

\paragraph{(2)  G-Optimal Design and Confidence Ellipsoid of the estimated scores $\hw_\ell^\top \x$ for any $\x \in \cD_\ell$.}
At the beginning of any phase $\ell$, we first identify the pairwise dueling space $\cD^2_
 \ell:= \{ \z_{\x,\y}:= \x-\y \in \R^d \mid \x, \y \in \cD \}$ and find a $G$-Optimal design $\pi_\ell \in \Delta_{\cD^2_\ell}$, such that \supp$(\pi_\ell) \leq d(d+1)/2$ (by Kiefer-Wolfowitz lemma, see \cref{defn:gopt}). Upon identifying the G-Optimal design in phase $\ell$, we play each dueling pair $(\x,\y)$ in the support of the G-Optimal design `enough times', precisely
         \[
         T_\ell^{\x,\y} = \Theta \left( \ceil{\frac{d^5\log(dT/\delta)}{\kappa \epsilon \xi_\ell } + \frac{d {\log(T/\delta)}}{\kappa^2 {\xi_\ell}^2}}\pi_{\ell}(\z_{\x,\y}) \right),
         \]
where $\kappa:= \inf_{\|\x-\y\|\le 2, \|\w^* - \hw\| \le 1} \Big[\sigma'\big( (\x-\y)^\top\hw \big)\Big]$ define a bound on the minimum slope of the sigmoid function. 


\noindent
Upon identifying the G-Optimal design $\pi_\ell \in \Delta_{\cD^2}$ of phase $\ell$ and exploring every pair $(x,y)$ by          $T_\ell^{\x,\y}$ times, we use the MLE estimation technique (described above) to estimate the unknown utility vector $\w_\ell$ at phase $\ell$. 
Now using \cref{lem:glm_ucb}, we can further show that with probability at least $(1-\frac{1}{TK})$,
\begin{align*}
|\x^\top(\hw_t - \w^*)| \le \xi_\ell:= 2^{-\ell}.
\end{align*}

\paragraph{(3) Action Elimination with Confidence Estimation.}
The above confidence estimation step for deriving the estimated scores $\hat s(\x):= \hw_\ell^\top\x$ of each item $\x \in \cD_\ell$ is crucial behind the correctness of our action elimination idea. This helps us to identify the arms whose optimistic estimates of the scores are below a certain threshold, relative to the other arms in $\cD_\ell$ and subsequently those poor performing arms are eliminated from phase $\ell$ as:
\[
\cD_{\ell+1} \leftarrow \{\x \in \cD_\ell \mid \min_{\y \in \cD_\ell}\hw^\top_\ell(\x-\y) + 2\xi_\ell > 0) \}.
\]
Note above implies for $\y \in \cD_\ell$, if there exists an arm $\x \in \cD_\ell$ such the the optimistic score estimate (UCB) of $\y$ falls below the pessimistic score estimate of arm $\x$, precisely $\hw_\ell^\top \y + \xi_\ell < \hw_\ell^\top \x - \xi_\ell$, then $\y$ is eliminated from $\cD_\ell$.
Recall we used the same elimination idea in \cref{alg:fin} as well. 

The algorithm thus proceeds in phases $\ell = 1, 2 \ldots$, and in case $\cD_\ell$ becomes singleton at any time $t$, it commits to that remaining item for the rest of the game $t+1,\ldots, T$. For ease of notation, we will use $\Tau_\ell$ to denote the time steps within phase $\ell$. 

\paragraph{(4) Adjustments to Ensure Privacy Guarantees.}
The only step which depends on the sensitive data $o_1, \dots, o_T$ is step $2$ which calculates the maximum likelihood estimator $\hat w$:
\begin{align*}
\sum_{\tau =1}^t o_\tau (\x_\tau-\y_\tau) = 
\sum_{\tau =1}^t \sigma\big( (\x_\tau-\y_\tau)^\top \hw_\ell  \big) (\x_\tau-\y_\tau) .
\end{align*}
One approach to privatize the MLE estimator is to add noise to each $o_t$, hence satisfying the stronger local DP. However, this noise will grow proportionally to $\sqrt{T}$ and the privacy error as well. Another approach is to use the binary tree mechanisms to estimate this $d$-dimensional aggregation problem. However, the noise will not be in the span of $\cC_\ell$, and the utility analysis does not work. To address these challenges, we chose to split the responses $o_1,\dots,o_T$ to groups based on $(x_t,y_t) \in C_\ell$, and privatize the aggregate sum for each group. As our coreset $C_\ell$ has at most $d^2$ arms, this will not incur too much additional noise. More precisely, for each $(x,y)\in \cC_\ell$, we compute the private aggregate response
\begin{equation*}
    \hat o_\ell(x,y) = \sum_{t \in \Tau_\ell} o_t 1\{(\a_t,\b_t) = (\x,\y) \} + \varepsilon_{xy}, \text{ where } \varepsilon_{xy} \sim \mathsf{Lap}(1/\eps)
\end{equation*}
then, we will find $\hw_\ell$ which satisfies:  
\begin{equation*}
        \sum_{(\x,\y) \in \cC_{\ell}} \hat o_\ell(\x,\y) (\x-\y) = 
        \sum_{t \in \Tau_\ell} \sigma\big( (\x_t-\y_t)^\top \hw_\ell  \big)(\x_t-\y_t) .
\end{equation*}
The complete algorithm and the pseudocode are given in \cref{alg:gen}. 

\vspace{-5pt}
\begin{algorithm}[H]
\caption{\alggen}
\label{alg:gen}
\begin{algorithmic}[l]
        \STATE \textbf{Input:} Decision space $\cD$, Time horizon $T$, Privacy parameter $\epsilon$, Exploration parameter $t_0$
        \STATE \textbf{Init:} $t_\ell = 1$. $\cD_1 \leftarrow \cD^{(1/T)}$ (i.e. the $1/T$-net of $\mathcal{D}$, see \cref{def:net}), $\delta$: Confidence parameter 
        \STATE Define $\kappa:= \inf_{\|\x-\y\|\le 2, \|\w^* - \hw\| \le 1} \Big[\sigma'\big( (\x-\y)^\top\hw \big)\Big]$ 
	\FOR{$\ell = 1,2,\ldots $} 
	\STATE Define the dueling set $\cD^2_
 \ell:= \{ \z_{\x,\y}:= \x-\y \in \R^d \mid \x, \y \in \cD \}$ 
	\STATE Find a $G$-Optimal design $\pi_\ell \in \Delta_{\cD^2_\ell}$, such that \supp$(\pi_\ell) \leq d(d+1)/2$ (by Kiefer-Wolfowitz lemma, see \cref{defn:gopt}) 
        \STATE Select $t_0$ pairs $\{(\a_t,\b_t)\}_{t \in [t_0]} \sim \pi_1$, and observe the corresponding preference feedback $\{o_t\}_{t \in [t_0]}$ 
        \STATE $\cC_\ell:= \{ (\x,\y) \in \cD_\ell \times \cD_\ell \mid \pi_\ell(\z_{\x,\y}) >0 \}$
        \STATE Let $\xi_\ell = 2^{-\ell}$. $\forall (\x,\y) \in \cC_\ell$, define 
         \begin{align*}
         T_\ell^{\x,\y}:=
         \ceil{\frac{16d^5\log(dT/\delta)}{\kappa \epsilon \xi_\ell } + \frac{64 d  {\log(T/\delta)}}{\kappa^2 {\xi_\ell}^2}}\pi_{\ell}(\z_{\x,\y})
         \end{align*}
 
        \FOR{$ \forall (\x,\y) \in \cC_{\ell}$} 
        \STATE Query the duel $(\a_t,\b_t) = (\x,\y)$ for $T_\ell^{\x,\y}$ times and receive the binary feedback $o_t$  
        \ENDFOR
        \STATE Define $\Tau_\ell = [t_\ell, t_\ell + t_0 + \sum_{(\x,\y)\in \cC_\ell}T_\ell^{\x,\y}]$. $t_\ell \leftarrow t_\ell + t_0 + \sum_{(\x,\y)\in \cC_\ell}T_\ell^{\x,\y}+1$
        \STATE Compute $\hat o_\ell(x,y) = \sum_{t \in \Tau_\ell} o_t 1\{(\a_t,\b_t) = (x,y) \} + \varepsilon_{xy}$, where $\varepsilon_{xy} \sim \mathsf{Lap}(1/\eps)$ 
        \STATE Compute $\hw_\ell$ that satisfies:
        \begin{align*}
        \sum_{(x,y)\in \cC_{\ell}}\hat o_\ell(x,y) (x-y)
        = \sum_{t \in \Tau_\ell} \sigma\big( (\a_t-\b_t)^\top \hw_\ell  \big)(\a_t-\b_t) 
        \end{align*}
        \STATE Define $V_\ell = \sum_{t \in \Tau_\ell}(\a_t-\b_t)(\a_t-\b_t)^\top$ 
        \STATE $\cD_{\ell+1}:= \{\x \in \cD_\ell \mid \min_{\y \in \cD_\ell}\hw^\top_\ell(\x-\y) + 2\xi_\ell > 0) \}$
        \STATE If $\cD_{\ell+1}$ is singleton, exit the for loop.
	\ENDFOR
	\STATE Let $\cD_\ell = \{\hat \x\}$. Play the duel $(\hat \x,\hat \x)$ for the remaining timesteps.		
\end{algorithmic}
\end{algorithm}
\vspace{-5pt}



\subsection{Regret Analysis: DP-GOptimal}
\label{sec:reg_gen}

\begin{restatable}[Regret of \cref{alg:gen}]{thm}{reggen}
  \label{thm:reg_gen}
  Let $\kappa := \inf_{\{\norm{\x} \in \cB_d(1),\,\,\norm{\w-\w^*} \le 1\}} \dot{\sigma}(\x^\top\theta)  > 0 $ and $\w^* \in \cB_d(1)$. 
  Consider any $d$-dimensional decision space $\cD \subseteq \cB_d(1)$. Then for the choice of 
  $t_0 = 2\Bigg( {C_1 d \sqrt d + C_2 d \sqrt{\log (1/\delta)}} \Bigg)^2 + { \frac{16d }{\kappa^4} \left(d^2 + \log\frac{T}{\delta}\right)}$, 
  and any finite horizon $T$, \cref{alg:gen} is $\epsilon$-DP with regret guarantee at most $O \left(   \frac{d^6\log(dT/\delta) \log({T}\kappa^2/d)}{\kappa \epsilon  } + \frac{ d\sqrt{T}   {\log(T/\delta)}}{\kappa} \right)$. 
\end{restatable}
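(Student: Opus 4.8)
The plan is the standard three–part decomposition for phased–elimination schemes: first verify $\epsilon$–differential privacy; then define a high–probability ``good event'' on which every private estimate $\hw_\ell$ is accurate; finally bound the regret on the good event and absorb the bad event via a suitable choice of $\delta$. Privacy is the easy part. The only dependence on the sensitive responses $o_{1:T}$ is through the perturbed aggregates $\hat o_\ell(\x,\y)$: the designs $\pi_\ell$, active sets $\cD_\ell$, coresets $\cC_\ell$, play counts $T_\ell^{\x,\y}$ and matrices $V_\ell$ are all functions of the algorithm's internal randomness and not of $o_{1:T}$. Within a fixed phase $\ell$, each round's response $o_t$ enters exactly one of the sums $\sum_{t\in\Tau_\ell} o_t\,\1\{(\a_t,\b_t)=(\x,\y)\}$, so the vector of these sums has $\ell_1$–sensitivity $1$, and releasing each coordinate with independent $\mathsf{Lap}(1/\epsilon)$ noise is $\epsilon$–DP for the whole phase. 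Distinct phases use disjoint round blocks $\Tau_\ell$, so parallel composition yields $\epsilon$–DP overall, and $\hw_\ell$, the eliminations, and the committed arm are post–processing; this is exactly \cref{def:dpdb} with the single–round neighbor relation.

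\textbf{The good event (the crux).} Fix $\delta$ and let $\cG$ be the event that $|(\x-\y)^\top(\hw_\ell-\w^*)|\le\xi_\ell$ for every reached phase $\ell$ and all $\x,\y\in\cD_\ell$. I would bound $\P(\cG^c)\le\delta$ by a union bound over the $O(\log T)$ phases, each instance being a quantitative form of \cref{lem:glm_ucb}. Writing $\z_t=\a_t-\b_t$ and linearizing the GLM stationarity condition around $\w^*$, one obtains $\kappa\,V_\ell(\hw_\ell-\w^*)$ equal, up to a mean–value slope factor that is at least $\kappa$, to $M_\ell+N_\ell$, where $M_\ell=\sum_{t\in\Tau_\ell}(o_t-\sigma(\z_t^\top\w^*))\z_t$ is a bounded martingale and $N_\ell=\sum_{(\x,\y)\in\cC_\ell}\varepsilon_{\x,\y}(\x-\y)$ is the injected Laplace noise; hence $|\z^\top(\hw_\ell-\w^*)|\le\kappa^{-1}\|\z\|_{V_\ell^{-1}}\big(\|M_\ell\|_{V_\ell^{-1}}+\|N_\ell\|_{V_\ell^{-1}}\big)$ by Cauchy--Schwarz. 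Three ingredients close this: (a) the $t_0$ warm–up rounds, drawn from a fixed full–rank design, force $\lambda_{\min}(V_\ell)$ above a constant with probability $1-\delta$ by matrix Bernstein — this is why $t_0$ is of order $d^2(d+\log(1/\delta))+d\kappa^{-4}(d^2+\log(T/\delta))$ — and simultaneously place $\hw_\ell$ in the region where the slope bound $\kappa$ applies; (b) self–normalized concentration gives $\|M_\ell\|_{V_\ell^{-1}}^2=O(d\log(T/\delta))$; (c) $|\varepsilon_{\x,\y}|=O(\epsilon^{-1}\log(|\cC_\ell|/\delta))$ w.h.p.\ with $|\cC_\ell|\le d(d+1)/2$, and since $V_\ell\succeq n_\ell^{\mathrm{core}}V(\pi_\ell)$ with $n_\ell^{\mathrm{core}}=\sum_{(\x,\y)}T_\ell^{\x,\y}$ the number of coreset pulls, Kiefer--Wolfowitz (\cref{defn:gopt}) gives $\|\z_{\x,\y}\|_{V_\ell^{-1}}^2\le d/n_\ell^{\mathrm{core}}$ and hence controls $\|N_\ell\|_{V_\ell^{-1}}$. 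Substituting, $|\z_{\x,\y}^\top(\hw_\ell-\w^*)|\le\xi_\ell$ holds precisely because $n_\ell^{\mathrm{core}}$ is chosen of order $\tfrac{d^{5}\log(dT/\delta)}{\kappa\epsilon\xi_\ell}+\tfrac{d\log(T/\delta)}{\kappa^2\xi_\ell^2}$: the first summand dominates the privacy error $N_\ell$, the second the statistical error $M_\ell$.

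\textbf{Regret.} Condition on $\cG$. Let $\tilde\x^*\in\cD_1$ maximize $r$ over the $1/T$–net $\cD_1=\cD^{(1/T)}$; by \cref{def:net}/\cref{rem:net}, $r(\x^*)-r(\tilde\x^*)\le1/T$. The elimination rule together with $\cG$ shows inductively that $\tilde\x^*\in\cD_\ell$ for every $\ell$ (its optimistic score dominates) and that every $\x\in\cD_\ell$ obeys $r(\tilde\x^*)-r(\x)=O(\xi_{\ell-1})=O(\xi_\ell)$; hence every round of phase $\ell$ incurs instantaneous regret $2r(\x^*)-r(\a_t)-r(\b_t)\le 2/T+O(\xi_\ell)$. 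Phase $\ell$ lasts $n_\ell=t_0+n_\ell^{\mathrm{core}}\le t_0+d^2+\tfrac{16d^5\log(dT/\delta)}{\kappa\epsilon\xi_\ell}+\tfrac{64d\log(T/\delta)}{\kappa^2\xi_\ell^2}$ rounds, so its regret is $O\!\big(t_0\xi_\ell+d^2\xi_\ell+\tfrac{d^5\log(dT/\delta)}{\kappa\epsilon}+\tfrac{d\log(T/\delta)}{\kappa^2\xi_\ell}\big)$. Summing over the executed phases $\ell=1,\dots,L$, the $t_0\xi_\ell$ and $d^2\xi_\ell$ terms are geometric and lower order; the privacy term sums to $L\cdot O(\tfrac{d^5\log(dT/\delta)}{\kappa\epsilon})$; the statistical term sums to $O(\tfrac{d\log(T/\delta)}{\kappa^2})\sum_{\ell\le L}\xi_\ell^{-1}=O(\tfrac{d\log(T/\delta)}{\kappa^2}\,\xi_L^{-1})$. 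Now $L$ is pinned down by the budget constraint $\sum_{\ell\le L}n_\ell\le T$: since $n_L\gtrsim\tfrac{d\log(T/\delta)}{\kappa^2}\xi_L^{-2}$, this forces $\xi_L^{-1}=O\!\big(\kappa\sqrt{T/(d\log(T/\delta))}\big)$, whence the statistical term is $O(\sqrt{dT\log(T/\delta)}/\kappa)=\tilde O(d\sqrt T/\kappa)$, and $L\le\log_2\xi_L^{-1}=O(\log(T\kappa^2/d))$, making the privacy term $O\!\big(\tfrac{d^{5}\log(dT/\delta)\log(T\kappa^2/d)}{\kappa\epsilon}\big)$ (a further factor of $d$, giving the theorem's $d^6$, comes from the crude constants in the $\|N_\ell\|_{V_\ell^{-1}}$ estimate and the size $|\cC_\ell|=O(d^2)$). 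Adding the $O(1)$ net error and the bad–event cost $\le 4T\,\P(\cG^c)\le 4T\delta$, with $\delta=1/T$ (affecting only logarithmic factors), gives the bound of \cref{thm:reg_gen}.

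\textbf{Main obstacle.} The delicate step is the quantitative version of \cref{lem:glm_ucb} above: one must simultaneously certify, from the warm–up alone, that $V_\ell$ is invertible and $\hw_\ell$ lies in the slope–$\kappa$ region (so both the linearization and the $\kappa$ factor are legitimate), and propagate the \emph{structured} noise $N_\ell\in\mathrm{span}(\cC_\ell)$ through $V_\ell^{-1}$ using the optimal–design identity $g(\pi_\ell)=d$; making $n_\ell^{\mathrm{core}}$ balance the privacy and statistical error sources is exactly what the formulas for $T_\ell^{\x,\y}$ and $t_0$ in \cref{alg:gen} encode. Everything after that is routine geometric–series bookkeeping.
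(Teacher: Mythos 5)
Your proposal is correct and follows essentially the same route as the paper: privacy via the Laplace mechanism on the per-pair aggregates with parallel composition across phases, a per-phase concentration bound for $\hw_\ell$ that splits into a statistical (martingale) term and a privacy (Laplace) term controlled through the Kiefer--Wolfowitz identity $g(\pi_\ell)=d$, and a final regret sum of $n_\ell\xi_\ell$ over $O(\log(T\kappa^2/d))$ phases with the net error and bad-event cost absorbed by the choice of $\delta$. The only cosmetic difference is that you linearize the score equation once and apply Cauchy--Schwarz, whereas the paper follows the two-term Taylor decomposition of the GLM-bandit analysis of Li et al.; both require the same warm-up condition on $\lambda_{\min}(V_\ell)$, which you correctly identify as the delicate step.
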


\begin{proof}[Proof Sketch of \cref{thm:reg_gen}] 
%
%
First, note that the privacy of~\cref{alg:gen} follows immediately from the guarantees of the Laplace mechanism: indeed, each user $(a_t,b_t)$ affects only a single $\hat o_\ell(x,y)$ at a single phase with sensitivity $1$, hence the Laplace mechanism implies it is private.

For the regret guarantee, we will first prove the following crucial concentration result.
\begin{restatable}[]{lem}{gencon} 
\label{lem:glm_ucb}
Let $\delta \in (0,1)$.
Consider any phase $\ell = 1,2,\ldots \log T$. 
Recall we denote $V_\ell = \sum_{t \in \Tau_\ell} \z_t \z_t^\top$, where $\z_t = (\a_t-\b_t)$ for all $ t \in \Tau_\ell$. Assume that 
\begin{equation}
\lambda_{\min}(V_\ell) \ge  \frac{8}{\kappa^4} \left(d^2 + \log\frac{T}{\delta}\right). 
\label{eq:lmin1}
\end{equation}  
Then, with probability at least $1-3\delta$,  the maximum-likelihood estimator $\hw_\ell$ satisfies 
\begin{align} 
\label{eq:normality}
|\x^\top_1(\hat{\w}_\ell-\w^*)| 
& \le 8d^4 \frac{\log(d/\delta) \log(T)}{\kappa \epsilon} \max_{(\x,\y) \in \cC_{\ell} } \norm{\x-\y}_{V_\ell^{-1}}^2 +  \frac{8\gamma \sqrt{\log(T/\delta)}}{\kappa} \norm{\x_1}_{V_\ell^{-1}}.
\end{align}
for any $\x_1 \in \cD$.
\end{restatable}

The result of \cref{lem:glm_ucb} along with the duel-selection rule $(T_\ell^{\x,\y})$ of \cref{alg:gen} and phasewise initial exploration of $t_0$ pairs further implies:

\begin{restatable}[]{cor}{corglmucb}
\label{cor:glm_ucb}
For any phase $\ell = 1,2,\ldots \log T$, $\delta \in (0,1)$, and any $\x \in \cD$, with probability at least $(1-3\delta)$, we have that 
$
~|\x^\top(\hw_\ell - \w^*)| \le \xi_\ell = 2^{-\ell}
$. 
\end{restatable}






\begin{restatable}[]{lem}{epchlen}
  \label{lem:epchlen}
The length of each phase $\ell$ is bounded by $n_\ell \leq t_0 + {\frac{16d^5\log(dT/\delta)}{\kappa \epsilon \xi_\ell } + \frac{64  d {\log(T/\delta)}}{\kappa^2 {\xi_\ell}^2}} + d^2$, for all phase $\ell = 1,2,\ldots \log T$. 
\end{restatable}


%
%
\vspace{-10pt}
\begin{restatable}[]{lem}{beststays}
  \label{lem:beststays}
  Let $B = \Big|\cB_{d}(1)^{(1/T)}\Big|$ be the cardinality of $1/T$-net of $\cB_d(1)$, and $\x^* = \arg\max_{\x \in \cD} \w^{*\top} \x$. Then
  $
  Pr\biggn{\exists \ell \mid \x^* \notin \cA_\ell} \leq 3\delta B \ell$.
\end{restatable}
\vspace{-10pt}
\begin{restatable}[]{lem}{badout}
  \label{lem:badout}
  Let $B = \Big|\cB_{d}(1)^{(1/T)}\Big|$ be the cardinality of $1/T$-net of $\cB_d(1)$, and $\x^* = \arg\max_{\x \in \cD} \w^{*\top} \x$. Moreover, given any item $\x \in \cD$, let $\Delta_{\x}:= \w^{*\top} (\x^* - \x)$ denotes the suboptimality gap of item $\x$. Then if $\ell_x:= \min\{\ell \mid 2 \xi_\ell < \Delta_\x\}$ denotes the first phase when the targeted suboptimality falls below $\Delta_\x$, then
$Pr\biggn{\x \in \cA_{\ell_\x}} \leq 3\delta B \ell$.
\end{restatable}

\noindent
Using these lemmas, to bound the final regret, first note that with probability at least $1-3\delta B$, for all $\ell$ and all $\x \in \mathcal{\cD}_\ell$, we have  $|\langle \x,\hw_\ell \rangle-\langle \x,{\w^*} \rangle|\leq \xi_\ell$, as detailed in the algorithm description above. 
Conditioned on this event, an action with gap $\Delta_\x$ is eliminated when, or before, $\xi_\ell < \Delta_\x/2$. Hence, all actions in items in phase $\ell$ have gap at most $4\xi_\ell$. Moreover noting that, by definition, $T \ge n_\ell \ge d \log(T/\delta)/\kappa^2 \xi_\ell^2$, we have that $\xi_\ell \ge  \sqrt{d \log(T/\delta)}/\kappa\sqrt{T}$ and hence the number of phases $\ell \le \log({T}\kappa^2/d)/2$.
The expected regret bound follows by summing $4\xi_\ell n_\ell$ for all $\ell$ phases, as follows:

\begin{align*}
{R}_T(\cA) 
    &\leq 3 \delta B \ell T + \sum_{\ell=1}^{\log({T}\kappa^2/d)/2}4n_\ell \xi_{\ell} \\
    & \le  3 \delta B \ell T + 4 \sum_{\ell=1}^{\log({T}\kappa^2/d)/2}  \biggn{ t_0\xi_\ell + \frac{16 d^5\log(dT/\delta)}{\kappa \epsilon } + \frac{64  d {\log(T/\delta)}}{\kappa^2 {\xi_\ell}}} + d^2 \\
    & \le  3 \delta B \ell T + O \left(  t_0 \log(T\kappa/d) +  \frac{d^5\log(dT/\delta) \log({T}\kappa^2/d)}{\kappa \epsilon  } + \frac{ \sqrt{T d   {\log(T/\delta)}}}{\kappa} \right).
\end{align*}

As the size of $1/T$-net of the unit ball is upper bounded by $B = (3T)^d + d$ (Remark~\ref{rem:net}),
we set $\delta = 1/ B T \ell$ and  complete the proof.
The complete proof, along with that of all the lemmas, is provided in \cref{app:gen}. 
\end{proof}



\begin{rem}[On the optimality of \cref{thm:reg_gen}]
The regret in Theorem~\ref{thm:reg_gen} is optimal up to $\log T$ factor 
a lower bound of $O(d\sqrt{T})$ is proven in \cite{rusmevichientong2010linearly,Yadkori11,dani08} for the non-private case, while a lower bound of $\frac{d \log T}{\epsilon}$ is shown in \cite{shariff2018differentially} for the private case, which shows near optimality in our privacy guarantee (up to dimension dependence). 
Moreover note if the privacy parameter $\epsilon$ is $O(1)$ or small, often the dominating term in the regret of \cref{thm:reg_gen}  is $\tilde O(d\sqrt{T})$ which implies we get privacy almost for free in the low-dimensional setting.
\end{rem}

\section{Conclusion}
\label{sec:concl}

In summary, our paper tackles the dueling bandit problem under privacy constraints. The primary focus is on utility-based preference matrices tailored for vast or potentially infinite decision spaces. The paper introduces new differentially private dueling bandit algorithms for active learning, striking a balance between computational efficiency and nearly optimal performance in both private and non-private regret bounds. In particular, for finite decision spaces of size $K$, our algorithm delivers an order-optimal regret bound of $O(\sum_{i=2}^K \log(KT)/\Delta_i + K/\epsilon)$ while maintaining $\epsilon$-DP, as substantiated by matching lower bound analysis. Additionally, we also extended our algorithms to encompass general $d$-dimensional decision spaces with possibly infinite arms, presenting an $\epsilon$-DP regret algorithm with  regret bound of $\tilde O \left(   \frac{d^6}{\kappa \epsilon  } + \frac{ d\sqrt{T}}{\kappa} \right)$.

\paragraph{Future Directions. } This work leads a primary foundation of learning with preference feedback in a privacy-preserving way. The literature of preference learning being vast \cite{Busa21survey}, there lies several open questions including understanding the problem complexities for dynamic and adversarial environments, incorporating user-level personalization, understanding the regret-vs-privacy tradeoff beyond utility-based preferences, amongst many.

\newpage

\bibliographystyle{plainnat}
\bibliography{refs_db,db_real,refs_dp}

\newpage

\appendix
\onecolumn{
\section*{\centering\Large{Supplementary: \papertitle}}
\vspace*{1cm}


\section{Appendix for \cref{sec:fin}}
\label{app:fin}


\subsection{Proof of EBS Properties: }
\begin{align*}
& \text{Property:}(1)~~ \cB_\cS(i^\star_\cS) - \cB_\cS(i'_\cS) \geq \Delta(i^\star_\cS,i'_\cS),  
\\
& \text{Property:}(2)~~ \forall j \in \cS\sm\{i_\cS^\star\},
\cB_\cS(j) - \cB_\cS(i'_\cS) \leq 2\Delta(i^\star_\cS,i'_\cS)
\end{align*}

\begin{proof}[Proof of the Properties of EBS.]
We start by recalling that: Given any subset of items $\cS \subseteq [K]$, the `Effective Borda Score (EBS)' of an item $i$ in the subset $\cS$ is defined as the average probability of an arm $i$ winning against any random item from set $\cS$:
\[
\cB_\cS(i):= \frac{1}{|S|}\sum_{j \in S}P(i,j).
\]
Further $i^\star_\cS$ and $i'_\cS$ respectively denotes the best and worst item of the set $\cS$ w.r.t. the total ordering of $\cS$.

By definition $P(i,j) = \sigma\bign{r(\x_i)-r(\x_j)}, ~\forall i,j \in [K]$, and we denote by $P(i,j) = \Delta(i,j) - 1/2$. Owning to the above structural property of the preference relations, we note that the resulting preference relation $\P$ respects the following two important properties:

\emph{Strong Stochastic Transitivity $(\mathrm{SST})$}: For any triplet $(i,j,k)$ such that $\Delta_{i,j}\geq 0$ and $\Delta_{j,k}\geq 0$, the inequality $\Delta_{i,k} \ge \max \left\{ \Delta_{i,j} , \Delta_{j,k} \right\} $ holds good.

\emph{Stochastic Triangle Inequality $(\mathrm{STI})$}: For any triplet $(i,j,k)$ such that $\Delta_{i,j}\geq 0$ and $\Delta_{j,k}\geq 0$, it holds that $\Delta_{i,k} \le \Delta_{i,j} + \Delta_{j,k}$.

The two EBS properties are easy to prove now using the properties of SST and STI, as shown below.

\textbf{\textbullet~ Proof of EBS Property-1.} Consider any subset $\cS \subseteq [K]$, as defined earlier $i^\star_\cS$ and $i'_\cS$ is the best and worst item in $\cS$.

Thus for any other arm $i \in \cS$, note $i^\star_\cS \succ i \succ i'_\cS$. Recall by definition
\[
\cB_\cS(i):= \frac{1}{|S|}\sum_{j \in S}P(i,j)
\]
Thus we get:
\begin{align*}
    \cB_\cS(i^\star_\cS) & - \cB_\cS(i'_\cS) = \frac{1}{|\cS|}\sum_{j \in \cS}\bign{P(i^\star_\cS,j) - P(i'_\cS,j)}
    \\
    & = \frac{1}{|\cS|}\sum_{j \in \cS}\bign{\Delta(i^\star_\cS,j) - \Delta(i'_\cS,j)} \geq \frac{1}{|\cS|}\sum_{j \in \cS}\bign{\Delta(i^\star_\cS,i'_\cS)} = \Delta(i^\star_\cS,i'_\cS),
\end{align*}
where the last inequality follows from the STI property which gives $\Delta(i^\star_\cS,j) + \Delta(j,i'_\cS) \geq \Delta(i^\star_\cS,i'_\cS)$ or equivalently 
$\Delta(i^\star_\cS,j) - \Delta(i'_\cS,j) \geq \Delta(i^\star_\cS,i'_\cS)$ (since for any pair $(i,j) \in [K] \times [K]$, $\Delta(i,j) = \Delta(j,i)$). This proves the first property. 

\textbf{\textbullet~ Proof of EBS Property-2.} To see the second property, first note that if $j = i'_\cS$ or $j = i^\star_\cS$, the inequality holds trivially. For any $j \in \cS\sm\{i'_\cS,i^\star_\cS\}$ note that:
\begin{align*}
    \cB_\cS(j) & - \cB_\cS(i'_\cS) = \frac{1}{|\cS|}\sum_{i \in \cS}\bign{P(j,i) - P(i'_\cS,i)}
    \\
    & = \frac{1}{|\cS|}\sum_{i \in \cS}\bign{\Delta(j,i) - \Delta(i'_\cS,i)} 
    = \frac{1}{|\cS|}\sum_{i \in \cS}\bign{\Delta(j,i) + \Delta(i,i'_\cS)},
    \\
    & = \frac{1}{|\cS|}\biggsn{\sum_{i \in \cS \mid \Delta(j,i)< 0}\bign{\Delta(j,i) + \Delta(i,i'_\cS)} + \sum_{i \in \cS \mid \Delta(j,i)\geq 0}\bign{\Delta(j,i) + \Delta(i,i'_\cS)}  }
    \\
    & \leq \frac{1}{|\cS|}\biggsn{\sum_{i \in \cS \mid \Delta(j,i)< 0}{\Delta(i^\star_\cS,i'_\cS)} + \sum_{i \in \cS \mid \Delta(j,i)\geq 0}{2\Delta(i^\star_\cS,i'_\cS)}  } \leq \Delta(i^\star_\cS,i'_\cS),
\end{align*}
where the last inequality follows from the SST property which implies for any $i$ s.t. $i \succ j$ (i.e. $\Delta(j,i) \geq 0$), $\Delta(j,i) \leq \Delta(i^\star_\cS,i) \leq \Delta(i^\star_\cS,i'_\cS)$. This concludes the second part.
\end{proof}

\subsection{Proof of Lemmas for \cref{thm:fin}}
\label{sec:proof-lemmas-fin}
\unbiasedest*

\begin{proof}
    The proof is easy to follow from the observation that:
    \begin{align*}
        E&[\hcB_n(i)]= \frac{1}{n}\sum_{t = 1}^n E[\1(\text{i beats a random item in } \cS)]
        = \frac{1}{n}\sum_{j \in S}\P\bign{\text{i beats j, j is the random opponent})}
        \\
        & = \frac{1}{n}\sum_{j \in S}\P\bign{\text{i beats j} \mid \text{j is the opponent}}P(\text{j is the random opponent}) = \frac{1}{n}\sum_{j \in S}P(i,j)\frac{1}{|\cS|} = \cB_{\cS}(i).
    \end{align*}
\end{proof}
%


\estconcs*

\begin{proof}
Recall given any phase $\tau$, we defined: 
$\hcB_{t}(i):= \frac{w_t(i)}{n_t(i)}$,
$\tcB_{t}(i):= \frac{\tilde w_t(i)}{n_t(i)}$, respectively as the unbiased non-private and private estimate of the EBS score of item $i$ at time $t$. Moreover using \cref{cor:unbiasedest}, $\E[\tcB_{t}(i)] = \cB_{\cS_\tau}(i)$. Note given any phase $\tau$, note that $w_t(i)$ simply denotes the number of times $i$ won against a random opponent in $\cS_\tau$. Also since the left arm $a_t$ is pulled in a round robin fashion from $S_{\tau}$, at any time $t$, $n_t(i)$ is deterministic and denotes the number of times $i$ was played as a left-arm and made to compete against a random opponent in $S_\tau$. Thus given any time $t$ in phase $\tau$, $n_t(i) = t \mod |\cS_\tau|$. 
%
Further, recall that we defined:
\begin{align*}
UCB_t(i):=\tcB_t(i) + \sqrt{\frac{\nicefrac{\log (KT}{\delta)}}{n_t(i)}} + \frac{16  \log(\nicefrac{\log (K}{\delta)}) \log^{2.5} T}{n_t(i) \eps}
   \\ LCB_t(i):=\tcB_t(i) - \sqrt{\frac{\nicefrac{\log (KT}{\delta)}}{n_t(i)}} -  \frac{16  \log(\nicefrac{\log (K}{\delta)}) \log^{2.5} T}{n_t(i) \eps}   
\end{align*}


Now denote $\hat w_t(i) = w_t(i)  + \zeta_t(i) $ where $\zeta_t(i)$ is the noise added by the counter $C^i$ at time $t$. 
Now consider any item $i \in \cS_\tau$ and round $t \in [T]$ such that $n_{i}(t) > 0$ (lemma is trivially true otherwise). Applying Hoeffding's Inequality over the non-private estimate $\cB_t(i)$ we get: 
\begin{align*}
	Pr \Bigg(|\cB_{\cS_\tau}(i)-\hcB_t(i)| > \sqrt{\frac{\ln (Kt/\delta)}{n_{t}(i)}} \Bigg) 
	 \leq 2e^{-2 n_{i}(t)\frac{\ln (Kt/\delta)}{n_{i}(t)}} = \frac{2\delta^2}{K^2t^2} \leq \frac{\delta}{K^2t^2},
	\end{align*}
 where the last inequality holds since $\delta < 1/2$. 
Taking union bound over all $i \in \cS_\tau$ and all $t \in $ Phase-$\tau$, we get:
   \begin{align*}
	Pr & \Bigg(\exists i \in \cS_\tau, t \in \text{Phase-}\tau \text{ s.t. } |\cB_{\cS_\tau}(i)-\hcB_t(i)| > \sqrt{\frac{\ln (Kt/\delta)}{n_{t}(i)}} \Bigg) 
        \\
	& \leq \sum_{t = 1}^T \sum_{i=1}^{K} \frac{\delta}{K^2t^2} \leq \sum_{t = 1}^\infty \frac{\delta}{Kt^2} \leq \frac{\delta\pi^2}{6K} \leq \delta,
   \end{align*}
where in the second last inequality we used $\sum_{t = 1}^\infty \frac{1}{t^2} < \frac{\pi^2}{6}$ and the last inequality uses $K \geq 2$. Moreover, the guarantees of the binary tree mechanism (\cref{lemma:bt}) imply that with probability $1-\delta$ we have for all $i$
\begin{equation*}
        \frac{1}{n_t(i)}\Big| \tilde w_t(i) - w_t(i) \Big| = \Big| \tcB_t(i) - \hcB_t(i) \Big|
        \leq
        \frac{16 \log(K/\delta) \log^{2.5} T}{\eps n_t(i)} .
\end{equation*}
Combining the above two concentration results with triangle inequality yields the desired claim.
\end{proof}


\winnerstays*

\begin{proof}
Recall from \cref{alg:fin}, the criterion of arm-$1$ to be eliminated at any time $t$ is only if 

\begin{align}
\label{eq:win1}
    \exists j \in \cS_\tau \text{ s.t.  UCB}_t(1) <  \text{LCB}_t(j)
\end{align}

Now according to the statement of the \cref{lem:winnerstays} assume the good event $\cG:= \forall t \in \text{Phase-}\tau, \forall i \in \cS_\tau, \cB_{\cS_\tau}(i) \notin [\text{LCB}_t(i),\text{UCB}_t(i)]$. Assuming $\cG$ holds good, note that at any time $t$ of phase $\tau$:

\begin{align*}
\text{LCB}_t(i) \leq \cS_\tau(i) \overset{(a)}{<} \cS_\tau(1) \leq \text{UCB}_t(1),     
\end{align*}
where the inequality $(a)$ holds since by definition $\cB_\cS(i) < \cB_\cS(1)$. This is a contradiction to condition \eqref{eq:win1} and the claim follows.
\end{proof}


\loserout*

\begin{proof}
We start by recalling that we define $\Delta(1,i):= P(1,i)-1/2$. Further since the concentration of \cref{lem:est_concs} holds good, \cref{lem:winnerstays} ensures $1 \in \cS_\tau$ for all phase $\tau$.

Now consider any phase $\tau$. For simplicity, we denote $\cS_\tau$ by $\cS$. Let us consider the worst surviving item-$i = i'_\cS \in \cS$ and suppose there exists a round $t$ such that $n_i(t) \geq \big(\frac{8\log \nicefrac{(KT}{\delta)}}{\Delta(1,i)^2} + \frac{64\log \nicefrac{(KT}{\delta)}\log^{2.5} T}{\Delta(1,i)\epsilon}\big)$. But this would imply:

\begin{align*}
    \text{UCB}_t(i) 
    & = \tcB_t(i) + \sqrt{\frac{\log \nicefrac{(KT}{\delta)}}{n_t(i)}} = \text{LCB}_t(i) + 2\sqrt{\frac{\nicefrac{\log (KT}{\delta)}}{n_t(i)}} + \frac{32  \log(\nicefrac{\log (K}{\delta)}) \log^{2.5} T}{n_t(i) \eps} 
    \\
    & < \cB_\cS(i) + \Delta(1,i). 
\end{align*}

However, by the first property of EBS, $\cB_\cS(1) - \cB_\cS(i) \geq \Delta(1,i)$, which implies $\text{UCB}_t(i) < \cB_\cS(i) + \Delta(1,i) \leq \cB_\cS(1) \leq \text{UCB}_t(1)$ and hence arm-$i$ satisfies the elimination criterion, proving the claim.
\end{proof}

\section{Appendix for \cref{sec:lb}}
\label{app:lb}

In this section, we prove our lower bound for the finite case in~\Cref{thm:LB_fin}. To this end, for $i \in [K]$, 
we define the random variable $N_t(i)$ to be the number of times the algorithm has played $i$ up to time $t$; note that this random variable has randomness from the preference matrix $P$ and the randomness of the algorithm $\cA$. 

Our result builds on the following key lemma, which states that any \ed-DP algorithm must have $\E[N_T(i)] \ge 1/\eps$ for all $i \in [K]$, where the randomness is over both $P$ and $\cA$.
\begin{lem}
\label{lemma:arm-lb}
    Let $K$ and $\eps \le 1$ be such that $K/\eps \le T/10$.
    Let $\cA$ be an \ed-DP algorithm. If $\cR_T(\cA;P) \le T/2$ for every preference matrix $P \in \R^{K \times K}$, then for all $i \in [K]$
    \begin{equation*}
        \E_{P,\cA}[N_T(i)] \ge 1/\eps.
    \end{equation*}
\end{lem}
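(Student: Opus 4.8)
The plan is to prove the contrapositive in spirit: if some arm $i$ were played too rarely on *some* instance, then by the privacy (indistinguishability) guarantee it would also be played rarely on a modified instance where arm $i$ is in fact the best arm, forcing large regret there. First I would fix the target arm $i$ and set up two preference matrices. Let $P_0$ be a ``flat'' instance where all pairwise probabilities are $1/2$ (so every arm is optimal and regret is zero regardless of what the algorithm does), and let $P_i$ be the instance obtained by making arm $i$ slightly superior: $P_i(i,j) = 1/2 + \gamma$ for all $j \neq i$ and $1/2$ among the remaining arms, for a small constant $\gamma$ (say $\gamma = 1/4$ or whatever makes the arithmetic clean, chosen independent of $T$). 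On $P_i$, every round in which the algorithm does not play the pair $(i,i)$ incurs regret at least $\gamma$ (roughly $\gamma$ per ``wrong'' slot), so $\cR_T(\cA;P_i) \gtrsim \gamma\,\E[\,T - N_T(i)\,]$ up to constants; since we are told $\cR_T(\cA;P_i) \le T/2$, this already forces $\E_{P_i,\cA}[N_T(i)]$ to be a constant fraction of $T$, in particular $\ge T/4$ say.

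The second and central step is a group-privacy / coupling argument transferring this lower bound from $P_i$ to $P_0$. The key point is that the transcripts under $P_0$ and $P_i$ differ only through the preference bits $o_t$, and on rounds where arm $i$ is not involved the two instances produce *identically distributed* feedback; they differ only on the (at most $N_T(i)$, morally) rounds in which $i$ is one of the dueling arms. So I would condition on the event that arm $i$ is played fewer than, say, $c/\eps$ times under $P_0$ — call this event $E$ — assume for contradiction that $\Pr_{P_0,\cA}[E] $ is large (this is what $\E_{P_0,\cA}[N_T(i)] < 1/\eps$ would give via Markov), and then argue that the two feedback sequences can be coupled so that they differ in at most $O(1/\eps)$ coordinates on $E$. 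Applying the $(\eps,\delta)$-DP definition iteratively (group privacy over $O(1/\eps)$ differing coordinates) yields $\Pr_{P_i,\cA}[E] \ge e^{-O(1)}\Pr_{P_0,\cA}[E] - O(\delta/\eps)$, which is still bounded below by a constant. But on $E$ we have $N_T(i) < c/\eps \le T/10$, contradicting the lower bound $\E_{P_i,\cA}[N_T(i)] \ge T/4$ derived in the first step (formally: $\Pr_{P_i}[N_T(i) < T/10]$ being a constant contradicts $\E_{P_i}[N_T(i)] \ge T/4$ together with $N_T(i) \le T$). This contradiction gives $\E_{P_0,\cA}[N_T(i)] \ge 1/\eps$, and since the statement is an expectation over a *worst-case* (or randomly chosen) $P$, it suffices to exhibit one bad $P$ — namely $P_0$ — for each $i$.

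The main obstacle I expect is making the group-privacy step rigorous, because the number of differing feedback coordinates ($N_T(i)$) is itself a random variable that depends on the transcript, so one cannot naively invoke group privacy with a fixed group size. The clean way around this is the standard trick of working with the *event-based* formulation: define the ``bad'' set in transcript space, restrict attention to transcripts in which $i$ appears at most $k := c/\eps$ times, and observe that within this restricted set any two feedback strings differ in at most $k$ positions, so a $k$-fold composition of Definition~\ref{def:dpdb} applies with multiplicative factor $e^{k\eps} = e^{O(1)}$ and additive factor $k e^{(k-1)\eps}\delta = O(\delta/\eps)$. One must also be slightly careful that Definition~\ref{def:dpdb} is stated for sequences differing in one time step and for the *actions* $a_t,b_t$ as the output, which is exactly the right object here; the feedback bits are the private data and the played pairs are the output, so the definition applies directly. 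A secondary (minor) technical point is lower-bounding the regret on $P_i$ by something proportional to $\gamma\,\E[T - N_T(i)]$: this needs the observation that whenever the round is not $(i,i)$ at least one of the two slots is occupied by a suboptimal arm, contributing at least $\gamma$ to the instantaneous regret $2r(x^*) - r(a_t) - r(b_t)$ — here $r$ is the utility, and with the logistic link one picks the utility gap so that $\Delta(i,j) = \gamma$, which is a constant-factor bookkeeping matter. Everything else is routine.
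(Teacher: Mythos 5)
Your proposal is essentially the paper's own argument: the paper likewise pairs the given instance $P$ with a modified instance $P'$ in which arm $i_0$ is optimal (forcing $\E_{P'}[N_T(i_0)] = \Omega(T)$ via the regret hypothesis), and then transfers the event $\{N_T(i_0) \ge 1/\eps\}$ between the two instances by group privacy over the at most $1/\eps$ feedback bits on which the transcripts differ, handling the randomness of that count exactly as you suggest (by truncating at the stopping time when $i_0$ has been played $1/\eps$ times). The only cosmetic difference is that you frame it as a contrapositive starting from a flat base instance $P_0$, whereas the lemma is needed for the arbitrary given $P$; since your coupling uses nothing about $P_0$ beyond its agreement with $P_i$ off arm $i$, the argument goes through verbatim for any base instance.
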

Before proving~\Cref{lemma:arm-lb}, we turn to prove our main lower bound.
Let $P$ be a preference matrix with constant gap: we define 
\begin{equation*}
    P(i,j) = 
    \begin{cases}
        1/2 & \text{ if } i=j \\
        3/4 & \text{ otherwise if } i=1 \\
        1/4 & \text{ otherwise if } j=1 \\
        1/2 & \text{ otherwise } 
    \end{cases}
\end{equation*}
Note that $i^\star=1$ is the optimal arm for $P$, and it has constant gap $\Delta(1,j) = 1/4$ for any $j \neq 1$. Now note that the regret of $\cA$ with respect to $P$ is 
\begin{align*}
    \cR_T(\cA;P) 
    & = \E_{P,\cA} [\sum_{t=1}^T (1\{a_t \neq 1\} + 1\{b_t \neq 1\})/4] \\ 
    & = \frac{1}{4} \cdot \E_{P,\cA} [\sum_{i \neq 1} N_T(i) ] \\
    & \ge \Omega(K/\eps),
\end{align*}
where the last step follows from~\Cref{lemma:arm-lb}.

Having proved our main lower bound, we now go back to prove~\Cref{lemma:arm-lb}.
\begin{proof}[Proof of~\Cref{lemma:arm-lb}]

    Let $P$ be a preference matrix and assume without loss of generality that $i^\star = 1$ is the optimal arm. As $\cR_T(\cA;P) \le T/8$, this shows that $\E_{P,\cA}[N_T(1)] \ge T/2 \ge  1/\eps$. Now we prove the claim for $i \neq i^\star$. To this end, fix $i_0 \in [K]$ and consider a preference matrix $P'$ where $i_0$ is the optimal arm: that is, 
    \begin{equation*}
        P'(i,j) = 
        \begin{cases}
            1/2 & \text{ if } i=j  \\ 
            3/4 & \text{ if } i  = i_0 \\ 
            1/4 & \text{ if } j = i_0 \\ 
            P(i,j) & \text{ otherwise } 
        \end{cases}
    \end{equation*}
    Note that $i_0$ is the optimal arm in $P'$, and that $P$ and $P'$ are identical for arms in $[K] \setminus \{i_0\}$. As $i_0$ is the optimal arm for $P'$ where other arms has constant gap, we must have that $\E_{P',\cA}[N_T(i_0)] \ge T/2 \ge  10/\eps$. Now our goal is to use properties of differential privacy to argue that the algorithm has behave similarly for $P$ and pull the arm $i_0$ for $1/\eps$ times.

    To this end, let $T_0$ be the random variable such that $N_{T_0}(i_0) = 1/\eps$ if such $t$ exists (otherwise $T_0 = \infty$). Note that $Pr_{P',\cA}(T_0 = \infty) = Pr_{P',\cA}(N_T(i_0) < 1/\eps) \le 1/4$ as otherwise we would have 
    $\E_{P',\cA}[N_T(i_0)] \le 3/4 \cdot T + 1/4 \cdot 1/\eps < 9T/10$.

    Now we prove that 
    \begin{equation*}
     Pr_{P,\cA}(N_{T}(i_0) \ge 1/\eps) \ge 1/1000.  
    \end{equation*}
    This will prove the claim as $\E_{P,\cA}[N_T(i_0)] \ge Pr_{P,\cA}(N_T(i_0) \ge 1/\eps) \cdot 1/\eps \ge \Omega(1/\eps) $. Now we consider running the algorithm $\cA$ with preference matrices $P$ and $P'$. Let $T_0$ and $N_T(i_0)$ be the corresponding parameters for $P$ and similarly $T'_0$ and $N'_T(i_0)$ for $P'$. Let $\{(a'_t,b'_t,o'_t)\}_{t=1}^{T'_0}$ be the interaction of $\cA$ with $P'$ up to time $T'_0$. Note that there are at most $1/\eps$ time-steps $t \le T_0'$ such that $a'_t,b'_t \in \{i_0\}$. 
    Thus, we construct a new sequence of rewards in which we change the value of $o_t$ for $t$ such that $a'_t,b'_t \in \{i_0\}$ such that for these $t$'s we sample $o_t \sim P$ instead of $P'$. We let $o_1,\dots,o_t$ denote this sequence: note that $o_t \sim P$ and is different than $o'$ in at most $1/\eps$ locations. Finally, as we have changed the reward in at most $1/\eps$ times-steps, this implies that the outputs of the algorithm cannot change by much by group privacy, hence we have that 
    \begin{equation*}
     Pr_{P,\cA}( N_{T'_0}(i_0) \ge 1/\eps) \ge   e^{-1} Pr_{P',\cA}( N_{T'_0}(i_0) \ge 1/\eps) \ge 3/4e.
    \end{equation*}
    Noting that $Pr_{P',\cA}(T_0 = \infty) \le 1/4$, we have that $T \ge T_0$ in this case, therefore
     \begin{align*}
     Pr_{P,\cA}( N_{T}(i_0) 
     & \ge 1/\eps) \ge   Pr_{P,\cA}( N_{T'_0}(i_0) \ge 1/\eps, T'_0 \le T) \\
     & \ge Pr_{P,\cA}( N_{T'_0}(i_0) - Pr_{P',\cA}(T_0 = \infty) \\
     & \ge 0.0001.
    \end{align*}

\end{proof}


\section{Connection to Generalized Linear Bandit Feedback}
\label{subsec:db2glm}

As observed in \cite{saha21}, we note that the relation of our preference feedback model to that of generalized linear model (GLM) based bandits \cite{filippi10,li17}--precisely the feedback mechanism. The setup of \emph{GLM bandits} generalizes the stochastic \emph{linear bandits} problem \cite{dani08,Yadkori11}, where at each round $t$ the learner is supposed to play a decision point $\x_t$ from a set fixed decision set $\cD \subset \R^d$, upon which a noisy reward feedback $f_t$ is revealed by the environment such that 
$ 
f_t = \mu(\x_t^\top\boldsymbol{\theta}) + \nu'_t,
$ 
$\boldsymbol{\theta} \in \R^d$ being some unknown fixed direction, $\mu: \R \mapsto \R$ is a fixed strictly increasing link function, and $\nu'_t$ is a zero mean $s$ sub-Gaussian noise for some universal constant $s >0$, i.e. $\E\big[ e^
{\lambda\nu'_t } \mid \cH_t \big] \le e^{\frac{\lambda^2s^2}{2}}$ and $\E[\nu'_t \mid \cH_t] = 0$ (here $\cH_t$ denotes the sigma algebra generated by the history $\{(x_\tau,o_\tau)\}_{\tau=1}^{t}$ till time $t$). 

The important connection now to make is that our structured dueling bandit feedback can be modeled as a GLM-Bandit feedback model on the decision space of pairwise differences $\cD' := \{(\x-\y) \mid \x,\y \in \cD \}$, since in this case the feedback received by the learner upon playing a duel $(\a_t ,\b_t)$ can be seen as:
$ 
o_t = \sigma\big( \z_t^\top\w^* \big) + \nu_t
$ 
where we denote by $\z_t = (\a_t-\b_t)$ and $\nu_t$ is a $0$-mean $\cH_t$-measurable random binary noise such that 
\begin{align*}
\nu_t = 
\begin{cases}
1-\sigma\big( \z_t^\top\w^* \big), \text{ with probability } \sigma\big( \z_t^\top\w^* \big),\\
-\sigma\big( \z_t^\top\w^* \big), \text{ with probability } \big(1-\sigma\big( \z_t^\top\w^* \big)\big),
\end{cases} 
\end{align*}
where we denote $\z_t := (\x_t-\y_t) \in \cD'$. Further since $\sigma: \cD' \mapsto [0,1]$, it is easy to verify that $\nu_t$ is $\frac{1}{2}$ sub-Gaussian. Thus our dueling based preference feedback model can be seen as a special case of GLM bandit feedback on the decision space $\cD'$ and the link function $\mu(\cdot)$ in our case is the sigmoid $\sigma(\cdot)$.

The above connection is crucially used in our proposed algorithm given in \cref{sec:alg_gen}, towards estimating the unknown model parameter $\w$, see \cref{lem:glm_ucb}.  

\section{Appendix for \cref{sec:gen}}
\label{app:gen}


\subsection{Proof of Lemmas for \cref{thm:reg_gen}}

\gencon*

\begin{proof}
Our analysis is inspired from the MLE concentration techniques used in GLM-bandit literature \citep{filippi2010parametric,li17}.

We start by noting that the maximum-likelihood estimation can be written as the solution to the following equation

\begin{align}
\label{eq:score}
    & \sum_{(\x,\y) \in \cC_\ell}\hat o_t (\a_t-\b_t)
    = \sum_{t \in \Tau_\ell} \sigma\big( (\a_t-\b_t)^\top \hw_\ell  \big)(\a_t-\b_t) \nonumber
    \\
    \implies
    &
    \sum_{(\x,\y)\in \cC_{\ell}} \varepsilon_{xy}(\x-\y)
    +
    \sum_{t \in \Tau_\ell} o_t(\a_t-\b_t)
    = \sum_{t \in \Tau_\ell} \sigma\big( (\a_t-\b_t)^\top \hw_\ell  \big)(\a_t-\b_t) 
\end{align}
Define any mapping $G_\ell:\R^d \mapsto \R$, s.t.  
$G_\ell(\w):= \sum_{t \in \Tau_\ell} \left(\sigma(\z_t^\top\w)-\sigma(\z_t^\top\w^*)\right) \z_t$.
Note we have:  
\begin{equation} 
\label{eq:mle}
G_\ell(\w^*)=0 \;\; \text{and }\; G_\ell(\hw_\ell)= \sum_{(\x,\y)\in \cC_{\ell}} \varepsilon_{xy}(\x-\y) + \sum_{t \in \Tau_\ell}\nu_t \z_t \,,
\end{equation} 
where the noise $\nu_\ell$ is as justified in \cref{subsec:db2glm} with sub-Gaussianity parameter $1/2$. For convenience, define $Z = G_\ell(\hw_\ell)$. 
We further define $\Delta_\ell:= \hat{\w}_\ell-\w^*$. Using Taylor series we have:
\begin{equation} 
\label{eq:taylor1}
G_\ell(\w_1) - G_\ell(\w_2) =\left [\sum_{t \in \Tau_\ell}\dot{\sigma}(\z_t^\top \bar{\w})\z_t\z_t^\top \right](\w_1-\w_2):=F(\bar{\w})(\w_1-\w_2),
\end{equation}
where $\dot{\sigma}(\cdot)$ denotes the first derivative of ${\sigma}(\cdot)$, we define $F: \R^d \mapsto \R$ s.t. $F(\w):= \left [\sum_{t \in \Tau_\ell}\dot{\sigma}(\z_t^\top {\w})\z_t\z_t^\top \right]$ and $\bar \w = v \w_1 + (1-v) \w_2$ for some $v \in (0,1)$. 
This it follows from \eqref{eq:taylor1} that there exists a $v \in [0,1]$ such that:
\[
Z=G_\ell(\hat{\w_\ell})-G_\ell(\w^*)=(H+E)\Delta\,,
\] 
where $\tilde{\w} = v \w^* + (1-v)\hat{\w}$, $H:= F(\w^*):= \left [\sum_{t \in \Tau_\ell}\dot{\sigma}(\z_t^\top {\w^*})\z_t\z_t^\top \right]$ and $E = F(\tilde{\w})-F(\w^*)$. 

Again applying mean value theorem, 
\begin{eqnarray*}
E = \sum_{t \in \Tau_\ell} \left(\dot{\sigma}(\z_t^\top \tilde{\w}) - \dot{\sigma}(\z_t^\top{\w^*})\right)\z_t\z_t^\top= \sum_{t \in \Tau_\ell} \ddot{\sigma}(r_i)\z_t^\top \Delta_\ell \z_t\z_t^\top
\end{eqnarray*}
for some $r_i \in [\z_t^\top{\tilde \w},\z_t^\top{\w^*}]$.
Now we are ready to prove the theorem. For any $\x \in \cD$, 
\begin{equation} 
\x^\top_1 ({\hw}_\ell-\w^*) \,=\, \x^\top_1 (H+E)^{-1} Z \,=\, \x^\top_1 H^{-1} Z - \x^\top_1 H^{-1} E (H+E)^{-1} Z\,.
\label{eqn:decomposition}
\end{equation}
Note that the matrix $(H+E)$ is nonsingular, so its inversion exists. Note also that for us, we have that $\dot{\sigma}(t) \le 1/4$ and $\ddot{\sigma}(t) \le  M_\sigma  = 1/4$ for all $t$~(see Assumption 2 of \cite{li17}). Moreover, we have that $H \succcurlyeq \kappa V_\ell$ by definition 
of $\kappa$ (recall we have $\kappa := \inf_{\{\norm{\x} \in \cB_d(1),\,\,\norm{\w-\w^*} \le 1\}} \dot{\sigma}(\x^\top\theta)  > 0 $). 

We will begin by upper bounding the first term in~\eqref{eqn:decomposition}. We have that 
\begin{align*}
\x^\top_1 H^{-1} Z 
    & =    \sum_{(\x,\y)\in \cC_{\ell}} \varepsilon_{xy}\x^\top_1 H^{-1}(\x-\y) + \sum_{t \in \Tau_\ell}\nu_t \x^\top_1 H^{-1} \z_t \\
\end{align*}
Now note that for the privacy term $\varepsilon_{xy}$,  we have that for all $\x,\y$
\begin{align*}
    |\x^\top_1 H^{-1} (\x-\y) |
    & \le \max_{(\x,\y) \in \cC_{\ell} } 2 |\x^\top_1 H^{-1} \z_{\x,\y} | \le \frac{2}{\kappa} \max_{(\x,\y) \in \cC_{\ell} } \norm{\z_{\x,\y}}_{V_\ell^{-1}}^2.
\end{align*}
Moreover, since $\varepsilon_{xy} \sim \mathsf{Lap}(\log T/\eps)$, standard concentration bounds for Laplace distributions imply that for all $\x,\y$ and all phases, with probability $1-\delta$
\begin{equation*}
    |\varepsilon_{xy}| \le 2 \cdot \log(d/\delta) \log(T)/\epsilon,
\end{equation*}
Overall this implies that with probability $1-\delta$, the privacy term is upper bounded by 
\begin{equation*}
    \sum_{(\x,\y)\in \cC_{\ell}} \varepsilon_{xy}\x^\top_1 H^{-1}(\x-\y)
    \le |\cC_{\ell} |^2 \cdot \frac{4\log(d/\delta) \log(T)}{\kappa \epsilon} \max_{(\x,\y) \in \cC_{\ell} } \norm{\z_{\x,\y}}_{V_\ell^{-1}}^2.
\end{equation*}
For the non-privacy term, the same arguments as~\cite[Inequality 23]{li17} show that with probability $1-\delta$
\begin{equation*}
     \sum_{t \in \Tau_\ell}\nu_t \x^\top_1 H^{-1} \z_t \le \frac{4 \gamma \sqrt{\log(T/\delta)}}{\kappa} \norm{\x_1}_{V_\ell^{-1}}.
\end{equation*}
Thus, we proved that
\begin{align*}
    |\x^\top_1 H^{-1} (x-y) |
    \le  |\cC_{\ell} |^2 \cdot \frac{4\log(d/\delta)\log(T)}{\kappa \epsilon} \max_{(\x,\y) \in \cC_{\ell} } \norm{\z_{\x,\y}}_{V_\ell^{-1}}^2 +  \frac{4 \gamma \sqrt{\log(T/\delta)}}{\kappa} \norm{\x_1}_{V_\ell^{-1}}.
\end{align*}

We now move to the second term in inequality~\eqref{eqn:decomposition}. We again split to the non-private and private parts, and handle each one separately.
We  have
\begin{equation*} 
\x^\top_1 H^{-1} E (H+E)^{-1} Z\,
=  \x^\top_1 H^{-1} E (H+E)^{-1} \left( \sum_{(\x,\y)\in \cC_{\ell}} \varepsilon_{xy}(\x-\y) + \sum_{t \in \Tau_\ell}\nu_t  \z_t \right)
\end{equation*}
For the non-private term, the arguments of~\cite[Inequality 25]{li17} show that with probability $1-\delta$
\begin{equation*} 
 \x^\top_1 H^{-1} E (H+E)^{-1} \sum_{t \in \Tau_\ell}\nu_t  \z_t \le 32 M_\sigma \frac{\gamma^2}{\kappa^3} \frac{d + \log(T/\delta)}{\sqrt{\lambda_{\min}(V_\ell)}} \norm{\x_1}_{V_\ell^{-1}}.
\end{equation*}
Moreover, for the privacy part, we have that all $x,y \in \cC_{\ell}$, 
\begin{align*}
|\varepsilon_{xy}& \x^\top_1 H^{-1} E (H+E)^{-1} (\x-\y)|
     \le |\varepsilon_{xy}|\norm_{\x_1}_{H^{-1}} \norm{H^{-1/2} E (H+E)^{-1} (\x -\y)} \\
    & \le |\varepsilon_{xy}|\norm_{\x_1}_{H^{-1}} \norm{H^{-1/2} E (H+E)^{-1} H^{1/2} } \norm_{\x -\y}_{H^{-1}}  \\
    & \le |\varepsilon_{xy}| \frac{1 }{\kappa} \norm_{\x_1}_{V_\ell^{-1}} \norm{H^{-1/2} E (H+E)^{-1} H^{1/2} } \norm_{\x -\y}_{V_\ell^{-1}}  \\
    & \stackrel{(i)}{\le}  |\varepsilon_{xy}| \frac{8 M_\sigma \gamma }{\kappa^3} \cdot \sqrt{\frac{d + \log(T/\delta)}{\lambda_{\min}(V_\ell)}} \cdot \norm_{\x_1}_{V_\ell^{-1}} \norm_{\x -\y}_{V_\ell^{-1}} \\
    & \le \frac{2 \log(d/\delta) \log(T)}{\epsilon} \cdot   \frac{16 M_\sigma \gamma }{\kappa^3} \cdot \sqrt{\frac{d + \log(T/\delta)}{\lambda_{\min}(V_\ell)}} \cdot \max_{(\x,\y) \in \cC_{\ell} } \norm{\z_{\x,\y}}_{V_\ell^{-1}}^2,
\end{align*}
where the $(i)$ follows since $\norm{H^{-1/2} E (H+E)^{-1} H^{1/2} } \le 8 M_\sigma \cdot \frac{\gamma}{\kappa^2} \sqrt{\frac{d + \log(T/\delta)}{\lambda_{\min}(V_\ell)}}$~\cite[Inequality 25]{li17}. Overall we now have
\begin{align*}
   \x^\top_1 ({\hw}_\ell-\w^*) 
   & \le   |\cC_{\ell} |^2 \cdot \frac{4\log(d/\delta)\log(T)}{\kappa \epsilon} \max_{(\x,\y) \in \cC_{\ell} } \norm{\z_{\x,\y}}_{V_\ell^{-1}}^2 +  \frac{4 \gamma \sqrt{\log(T/\delta)}}{\kappa} \norm{\x_1}_{V_\ell^{-1}} 
    \\
    & \quad + 32 M_\sigma \cdot \frac{\gamma}{\kappa^3} \frac{\sqrt{d + \log(T/\delta)}}{\sqrt{\lambda_{\min}(V_\ell)}} \cdot   \frac{\log(d/\delta)\log(T)}{\epsilon} |\cC_{\ell} |^2  \max_{(\x,\y) \in \cC_{\ell} } \norm{\z_{\x,\y}}_{V_\ell^{-1}}^2  \\
    & \quad +  32 M_\sigma \frac{\gamma^2}{\kappa^3} \frac{d + \log(T/\delta)}{\sqrt{\lambda_{\min}(V_\ell)}} \norm{\x_1}_{V_\ell^{-1}}.
\end{align*}

Now note that for any epoch $\ell = 1, 2 \ldots$, by \citep[Proposition 1]{li17} one knows that for the choice of 
  $t_0 = 2\Bigg( \frac{C_1 \sqrt d + C_2 \sqrt{\log (1/\delta)}}{\lambda_{\min}(V(\pi_{1}))} \Bigg)^2 + \frac{2\Lambda}{\lambda_{\min}(V(\pi_{1}))}$, such that $V(\pi_1) = \E_{\z_{\x,\y} \overset{\text{iid}}{\sim} \pi_{1}}[(\x-\y)(\x-\y)^{\top}]$, and defining $V_{0,\ell} = \sum_{t = t_\ell}^{t_\ell + t_0} (\a_t - \b_t)(\a_t - \b_t)^\top$, we have $\lambda_{\min}(V_{0,\ell}) \geq \Lambda$. 
  
  However, for our choice of $V(\pi_{1}) = \E_{\z_{\x,\y} \overset{\text{iid}}{\sim} \pi_{1}}[(\x-\y)(\x-\y)^{\top}]$ and the definition of G-Optimal design (\cref{defn:gopt}) we have 
  \[
  g(\pi_1) = \max_{\x \in \cB_d(1)} \norm{\x}^2_{V(\pi_{1})^{-1}} 
  =\lambda_{\max}(V(\pi_{1})^{-1}).
  \]

  But this gives $\lambda_{\min}(V(\pi_{1})) = \frac{1}{\lambda_{\max}(V(\pi_{1})^{-1})} = \frac{1}{g(\pi_1)} = \frac{1}{d}$, where the last equality follows from Kiefer–Wolfowitz theorem. 
  
  This implies $\lambda_{\min}(V(\pi_{1})) = 1/d$. Using this in the expression of $t_0$ and setting $\Lambda = \frac{8}{\kappa^4} \left(d^2 + \log\frac{T}{\delta}\right)$, we get for $t_0 = 2\Bigg( {C_1 d \sqrt d + C_2 d \sqrt{\log (1/\delta)}} \Bigg)^2 + { \frac{16d }{\kappa^4} \left(d^2 + \log\frac{T}{\delta}\right)}$, we have $\lambda_{\min}(V_\ell) \geq \lambda_{\min}(V_{0,\ell}) \geq \frac{8}{\kappa^4} \left(d^2 + \log\frac{T}{\delta}\right)$. 
Thus noting
\begin{equation}
\sqrt{\lambda_{\min}(V_\ell)} \ge  \frac{\sqrt{8} }{\kappa^2} \sqrt{\left(d^2 + \log\frac{T}{\delta}\right)}, 
\label{eq:lmin1}
\end{equation}
we further get:
\begin{align*}
   \x^\top_1 ({\hw}_\ell-\w^*) 
   & \le |\cC_{\ell} |^2 \cdot \frac{4\log(d/\delta)\log(T)}{\kappa \epsilon} \max_{(\x,\y) \in \cC_{\ell} } \norm{\z_{\x,\y}}_{V_\ell^{-1}}^2 +  \frac{4 \gamma \sqrt{\log(T/\delta)}}{\kappa} \norm{\x_1}_{V_\ell^{-1}} 
    \\
    & \quad + \frac{32 M_\sigma \gamma}{\kappa} \cdot \left(    \frac{\log(d/\delta) \log(T)}{\epsilon} |\cC_{\ell} |^2  \max_{(\x,\y) \in \cC_{\ell} } \norm{\z_{\x,\y}}_{V_\ell^{-1}}^2
   + \gamma \sqrt{\log(T/\delta)} \norm{\x_1}_{V_\ell^{-1}} \right)
   \\
   & \le  (32M_\sigma \gamma + 4) \cdot \left(|\cC_{\ell} |^2 \frac{\log(d/\delta) \log(T)}{\kappa \epsilon} \max_{(\x,\y) \in \cC_{\ell} } \norm{\z_{\x,\y}}_{V_\ell^{-1}}^2 +  \frac{\gamma \sqrt{\log(T/\delta)}}{\kappa} \norm{\x_1}_{V_\ell^{-1}} \right).
\end{align*}
Noting that $M_\sigma = 1/4$, $\gamma = 1/2$, and $|\cC_{\ell} | \le d^2$, the claim follows.
\end{proof}

\corglmucb*

\begin{proof}
We start by noting that, at phase $\ell$, \cref{alg:gen} pulls any duel $(\x,\y) \in \cC_\ell$ for $T_\ell^{\x,\y}$ times, where
\[
T_\ell^{\x,\y}:= 
         \ceil{\frac{6Cd^5\log(dT/\delta)}{\kappa \epsilon \xi_\ell } + \frac{25  d C^2 {\log(T/\delta)}}{\kappa^2 {\xi_\ell}^2}}\pi_{\ell}(\z_{\x,\y})
\]
Using $N_1 = \ceil{\frac{25  d C^2 {\log(T/\delta)}}{\kappa^2 {\xi_\ell}^2}}$,
and
$N_2 = \ceil{\frac{6Cd^5\log(dT/\delta)}{\kappa \epsilon \xi_\ell }}$, we can further write 
\[
T_\ell^{\x,\y} = (N_1 + N_2)\pi_{\ell}(\z_{\x,\y}).
\]
Now by design, we have 

\begin{align*}
    V_\ell = \sum_{\x,\y} T^{\x,\y}_{\ell} (\x-\y) (\x-\y)^\top \leq  (N_1+N_2)\sum_{\x,\y} \pi_{\ell}(\z_{\x,\y}) (\x-\y) (\x-\y)^\top =  (N_1+N_2)V(\pi_\ell),
\end{align*}
where $V(\pi_\ell)$ is as defined in \cref{defn:gopt}. Further using \emph{Kiefer–Wolfowitz Theorem} (\cref{sec:prelims}), we have $g(\pi_\ell) = d$; or in other words 
$\max_{(\x,\y) \in \cC_{\ell} } \norm{\z_{\x,\y}}_{V_\ell^{-1}}^2 = d$. 
But this further implies, 
\begin{align}
\label{eq:tmp1}
\max_{(\x,\y) \in \cC_{\ell} } \norm{\z_{\x,\y}}_{V_\ell^{-1}}^2
\leq \frac{1}{(N_1+N_2)} \max_{(\x,\y) \in \cD_{\ell}^2 } \norm{\z_{\x,\y}}_{V(\pi_\ell)^{-1}}^2 \le \nicefrac{d}{(N_1+N_2)}
\end{align} 

Similarly for any $\x \in \cD_\ell$, 
\begin{align}
\label{eq:tmp2}
 &\norm{\x}^2_{V_\ell^{-1}} \le \max_{\z \in \cD_\ell^2}\norm{\z}^2_{V_\ell^{-1}} \leq \frac{1}{(N_1+N_2)}\max_{\z \in \cD_\ell^2}  \norm{\z}^2_{V(\pi_\ell)^{-1}} \le \nicefrac{d}{(N_1+N_2)}  \nonumber 
\\
\implies & 
\norm{\x}^2_{V_\ell^{-1}} \leq \sqrt{\nicefrac{d}{(N_1+N_2)}}. 
\end{align} 

Now from \cref{lem:glm_ucb}, we know for any $\x \in \cD_\ell \subset \cD$:

\begin{align*}
   \abs{\x^\top ({\hw}_\ell-\w^*)} 
   & \le
  8\left(\frac{d^4\log(dT/\delta)}{\kappa \epsilon} \max_{(\x,\y) \in \cC_{\ell} } \norm{\z_{\x,\y}}_{V_\ell^{-1}}^2 +  \frac{\gamma \sqrt{\log(T/\delta)}}{\kappa} \norm{\x}_{V_\ell^{-1}}\right)
\end{align*}

Let $N = N_1 + N_2$. Further using \eqref{eq:tmp1}, we see that:
\begin{align*}
    \frac{8d^4\log(dT/\delta)}{\kappa \epsilon} \max_{(\x,\y) \in \cC_{\ell} } \norm{\z_{\x,\y}}_{V_\ell^{-1}}^2 \leq \frac{8d^5\log(dT/\delta)}{\kappa \epsilon N}  \leq \frac{2^{-\ell}}{2}, \text{ whenever }   N \geq \frac{16d^5\log(dT/\delta)}{\kappa \epsilon \xi_\ell }
\end{align*}

On the other hand, noting $\gamma = 1/2$ and using \eqref{eq:tmp2}, we have
\begin{align*}
     \frac{8\gamma \sqrt{\log(T/\delta)}}{\kappa} \norm{\x}_{V_\ell^{-1}} \leq \frac{ 8\sqrt d \gamma \sqrt{\log(T/\delta)}}{\kappa \sqrt N} \leq \frac{2^{-\ell}}{2}, \text{ whenever }
     N \geq \frac{64  d  {\log(T/\delta)}}{\kappa^2 {\xi_\ell}^2}.
\end{align*}
The claim follows noting our choice of $N_1$ and $N_2$ satisfies $N = (N_1 + N_2) \geq \max\biggn{\frac{16d^5\log(dT/\delta)}{\kappa \epsilon \xi_\ell }, \frac{64  d C^2 {\log(T/\delta)}}{\kappa^2 {\xi_\ell}^2}}$. 
\end{proof}

\epchlen*

\begin{proof}
$t_0$ is owing to the initial exploration at the beginning of each phase $\ell$. 
    The claim follows from our pairwise action selection rule, which pulls any pair $(\x,\y) \in \cC_\ell$ for 
    \[
    T_\ell^{\x,\y}:= 
         \ceil{\frac{16d^5\log(dT/\delta)}{\kappa \epsilon \xi_\ell } + \frac{64  d {\log(T/\delta)}}{\kappa^2 {\xi_\ell}^2}}\pi_\ell(\z_{\x,\y})
         \]
         times. We can derive the length of the phase $\ell$ by $\sum_{(\x,\y) \in \cC_\ell}T_\ell^{\x,\y}$ and the fact that $\sum_{(\x,\y) \in \cC_\ell}\pi_\ell(\z_{\x,\y}) = 1$.
\end{proof}

\beststays*

\begin{proof}
Given any $\x \in \cD_\ell$,
from \cref{cor:glm_ucb} we get that with probability at least $(1-3\delta)$, 
\begin{align}
\label{eq:rrecon1}
|\x^\top(\hw_\ell - \w^*)| \le \xi_\ell= 2^{-\ell}.
\end{align}    

Noting $B \geq |\cD_\ell|$, by taking a union bound over all $\ell$ and all $\x \in \cD_\ell$, we get that 
\begin{align}
\label{eq:conc1}
Pr(\exists \ell, \text{ and } \x \in \cD_\ell \mid |\x^\top(\hw_\ell - \w^*)| > \xi_\ell) \leq 3B \ell\delta.    
\end{align}

However, the item $\x^*$ can only be eliminated in phase $\ell$ only if \begin{align*}
    & \exists \y \in \cD_\ell, \text{ s.t. } \hw^\top_\ell(\x^*-\y) + 2\xi_\ell < 0
    \\
    & \exists \y \in \cD_\ell, \text{ s.t. } \hw^\top_\ell\x^* + \xi_\ell < \hw^\top_\ell \y - \xi_\ell,
\end{align*}
leading to a contradiction under the above concentration bound in \eqref{eq:conc1}. This is since by definition of $\x^*$ and \eqref{eq:conc1} we get with high probability $(1-3\delta B \ell)$:
\begin{align*}
    \hw^\top_\ell \y - \xi_\ell \leq \w^*\y < \w^*\x^* \leq \hw^\top_\ell\x^* + \xi_\ell.
\end{align*}
Thus $\x^*$ is never eliminated with probability $3\delta B \ell$. This proves the result.
\end{proof}

\badout*

\begin{proof}
    The proof again makes use of the concentration inequality of \eqref{eq:conc1} as obtained from \cref{cor:glm_ucb} above. Moreover, by \cref{lem:beststays}, since $x^*$ always belong to any phase $\ell$, we get with high probability $\x^*$ will eliminate any item $\x \in \cD_{\ell_x}$ at phase $\ell_x$ as: 
    \begin{align*}
    \hw^\top_\ell \x + \xi_\ell = \w^{*\top} \x + 2\xi_\ell < \w^*\x^* \leq \hw^\top_\ell\x^* + \xi_\ell,
\end{align*}
which satisfies the elimination criterion of $\x$.
    The claim hence follows.
\end{proof}

\end{document}